\documentclass[letterpaper,10pt]{article}
\usepackage{fullpage}
\usepackage{graphicx}
\usepackage{multirow}
\usepackage{amsmath,amssymb,amsfonts}
\usepackage{amsthm}
\usepackage{mathrsfs}
\usepackage[title]{appendix}
\usepackage{xcolor}
\usepackage{textcomp}
\usepackage{manyfoot}
\usepackage{booktabs}
\usepackage{listings}
\usepackage{caption}
\usepackage{natbib}       %
\usepackage{hyperref}
\usepackage[capitalize,nameinlink,noabbrev]{cleveref}
\usepackage{mathtools}
\usepackage{algorithm}
\usepackage{algpseudocode}
\usepackage{dashrule}
\usepackage{tikz}
\usetikzlibrary{arrows.meta,positioning,calc,decorations.pathreplacing}
\usepackage{comment}
\usepackage{siunitx}
\usepackage{relsize}
\usepackage{ifthen}
\usepackage[colorinlistoftodos]{todonotes}
\usepackage[caption=false]{subfig}
\crefname{ALC@line}{line}{lines}
\Crefname{ALC@line}{Line}{Lines}
\hypersetup{colorlinks, hypertexnames=false, pageanchor=true,
            linkcolor=blue, citecolor={green!50!black}, urlcolor=cyan,
            pdftitle={Provable Edge-of-Stability for Adam on a One-Dimensional Quadratic},
            pdfauthor={Yi Man Fong, Heng Yang}}
\mathtoolsset{centercolon}
\crefname{assumption}{Assumption}{Assumptions}
\theoremstyle{plain}
\newtheorem{theorem}{Theorem}[section]
\newtheorem{proposition}[theorem]{Proposition}
\newtheorem{corollary}[theorem]{Corollary}
\newtheorem{lemma}[theorem]{Lemma}

\theoremstyle{remark}

\newtheorem{remark}[theorem]{Remark}
\theoremstyle{definition}
\newtheorem{definition}[theorem]{Definition}
\usepackage{microtype}
\usepackage{enumitem}
\newcommand{\abs}[1]{\left\lvert #1\right\rvert}
\newcommand{\cA}{\mathcal{A}}

\newcommand{\R}{\mathbb{R}}
\newcommand{\Nzero}{\mathbb{N}_0}
\newcommand{\Npos}{\mathbb{N}_{\ge 1}}
\newcommand{\PS}{\mathsf{S}}
\definecolor{supfill}{RGB}{233,244,234}
\definecolor{supline}{RGB}{63,142,88}
\definecolor{subfill}{RGB}{228,237,250}
\definecolor{subline}{RGB}{47,111,214}
\definecolor{neufill}{RGB}{246,246,244}
\definecolor{neuline}{RGB}{205,205,200}
\definecolor{inkgray}{RGB}{112,112,112}
\definecolor{cblue}{RGB}{47,111,214}
\definecolor{corange}{RGB}{232,99,42}
\newcommand{\subtxt}[1]{{\footnotesize\color{inkgray}#1}}

\title{Provable Edge-of-Stability for Adam on a One-Dimensional Quadratic}
\author{Yiman Fong%
\thanks{School of Engineering and Applied Sciences, Harvard University. Email:
\nolinkurl{yiman_fong@seas.harvard.edu},\\
\nolinkurl{fangyimin05@gmail.com}}
\and Heng Yang%
\thanks{School of Engineering and Applied Sciences, Harvard University.
Email: \nolinkurl{hankyang@seas.harvard.edu}}
}
\begin{document}

\maketitle

\begin{abstract}
The edge-of-stability (EoS) phenomenon of Adam has been widely observed, while its underlying dynamical mechanism is not yet fully understood. We study uncorrected Adam on a one-dimensional quadratic, a clean setting where constant curvature isolates the optimizer-induced dynamics behind the EoS.  We characterize the resulting dynamics across the parameter space.
In broad regimes, we prove that Adam exhibits a restoring tendency toward its frozen stability threshold $2(1+\beta_1)/[\eta(1-\beta_1)]$. We also identify settings in which this
edge-seeking mechanism breaks down, including strictly subcritical periodic
orbits and specially tuned trajectories that converge to the optimum while
remaining uniformly supercritical. These results give a concrete dynamical
explanation for Adam's EoS in a setting free of evolving loss geometry, while
also exposing its limitations.
\end{abstract}

\section{Introduction}
\label{sec:introduction}

Adam \citep{kingma2015adam} is among the most widely used optimizers in
modern deep learning. When neural networks are trained with first-order methods such as
gradient descent or Adam, a striking empirical regularity emerges---the
\emph{edge of stability} (EoS): the sharpness (the largest eigenvalue of
the Hessian of the loss function) rises toward the stability threshold and then oscillates
around it (Figure~\ref{fig:mechanism-beta0}, left). For gradient descent, this behavior has been observed extensively in neural-network training, where the sharpness approaches the classical threshold $2/\eta$ for learning rate $\eta$ \citep{cohen2021gradient}. For adaptive methods, \citet{cohen2024adaptivegradientmethodsedge} observed a similar phenomenon for Adam: the \emph{preconditioned sharpness} $\PS_t$---the largest eigenvalue of the Hessian after Adam's coordinatewise rescaling---rises toward and then stays near the stability threshold of the corresponding frozen dynamics. These results establish the EoS as a robust but so far largely empirical phenomenon and a strong departure from classical optimization theory. While EoS attracts extensive theoretical investigation on gradient descent, the mechanism of EoS for Adam remains mysterious. This leaves a basic question unresolved for Adam: \emph{why} does this happen?

\begin{figure}[t]
  \centering
  \includegraphics[width=0.85\linewidth]{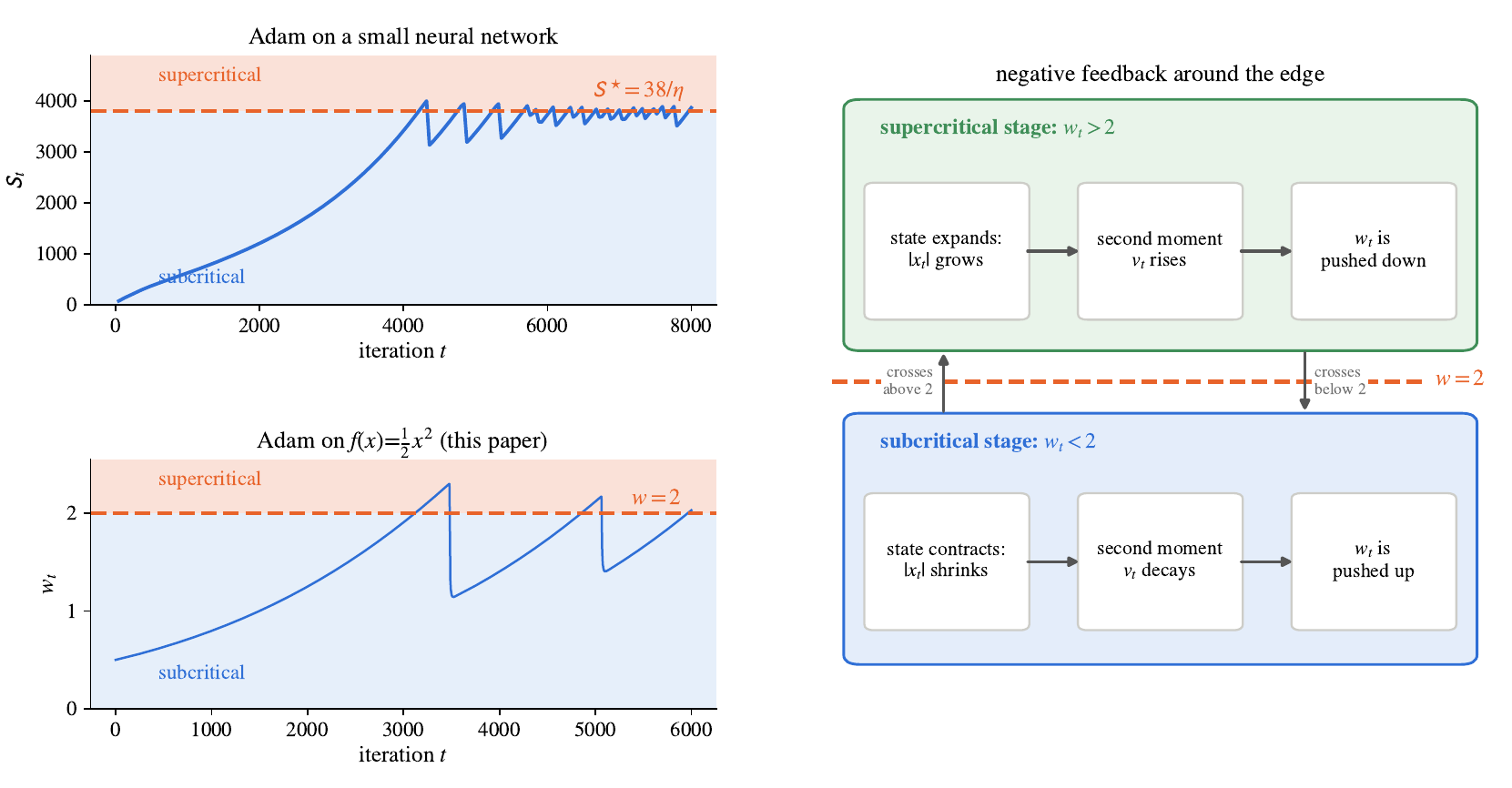}
  \caption{Edge-of-stability behavior and its negative-feedback mechanism.
  Left: Adam's sharpness oscillates around its frozen stability threshold
  in both neural-network training and the one-dimensional quadratic
  studied here.  Right: above the edge, expansion raises $v_t$ and pushes
  $w_t$ down; below the edge, contraction lowers $v_t$ and pushes $w_t$
  up.  Experimental details are given in Appendix~\ref{app:fig1-details}.}
  \label{fig:mechanism-beta0}
\end{figure} 
In this work, we provide theoretical evidence that, in contrast to
gradient descent, Adam's adaptive nature by itself induces EoS.  More
specifically, we work in the simplest possible setting, the
one-dimensional quadratic
\begin{equation}\label{eq:quad}
f(x)=\tfrac12 x^2,
\end{equation}
and prove that edge-of-stability behavior actually occurs, identifying
the mechanism that produces it, illustrated by the negative-feedback
loop in Figure~\ref{fig:mechanism-beta0} (right).  On this objective,
uncorrected Adam iterates
\begin{equation}
m_{t+1}=\beta_1m_t+(1-\beta_1)x_t,\qquad
v_{t+1}=\beta_2v_t+(1-\beta_2)x_t^2,
\qquad
x_{t+1}=x_t-\frac{\eta}{\sqrt{v_{t+1}}+\varepsilon}m_{t+1},
\label{eq:adam-1d}
\end{equation}
with momentum and second-moment parameters $0\le\beta_1<1$ and
$0\le\beta_2<1$, learning rate $\eta>0$, stabilizer $\varepsilon>0$, and
initial state $(x_0,m_0,v_0)$. We use the normalized sharpness $w_t=c\eta\,\PS_t$ with
$c=(1-\beta_1)/(1+\beta_1)$, for which the
frozen stability threshold becomes the parameter-free boundary $w=2$. The key mechanism is a negative-feedback loop driven by Adam's second-moment adaptation: above the
edge the iterates expand, $v_t$ inflates, the effective step size
shrinks, and $w_t$ is pushed back down; below the edge the iterates
contract, $v_t$ decays, and $w_t$ is pushed back up.  In broad regimes
we prove that this feedback forces $w_t$ to keep fluctuating around
$2$: it cannot persistently stay well above or well below the edge.
This built-in self-stabilization of the effective step size may also be
part of why Adam performs so well in practice.

Turning this mechanism into proofs requires arguments on the
two sides of the edge; Figure~\ref{fig:proof-roadmap} summarizes the
resulting proof structure (Section~\ref{sec:positive-momentum}).  In
the subcritical regime (left), an adaptive Lyapunov function certifies
contraction up to an explicit cutoff $\overline W<2$, which weakens the
second-moment forcing and pushes $w_t$ upward toward the edge.  This
mechanism is not universal: in the complementary parameter regime, under
certain conditions, strictly subcritical periodic orbits can arise when
$\varepsilon=0$.  In the supercritical regime (right), the sign geometry
separates two distinct behaviors.  Aligned trajectories expand and are
forced out of every fixed supercritical band in finite time, whereas
exceptional persistently misaligned trajectories can contract to the origin
while remaining uniformly supercritical.

Taken together, these results prove that Adam's adaptive state can generate
the negative feedback toward the frozen stability boundary illustrated in
Figure~\ref{fig:mechanism-beta0}, even when the loss curvature is completely
fixed.  At the same time, positive momentum introduces precise ways in which
this edge-seeking mechanism can fail, illustrating the additional
complexity of the general setting.  Thus, our analysis identifies both an
optimizer-induced mechanism for the EoS and its limitations already in the
simplest quadratic setting.  Extending this exact discrete analysis to
higher dimensions is an important direction for future work.

\begin{figure}[t]
  \centering
  \resizebox{\linewidth}{!}{%
\begin{tikzpicture}[
  font=\small,
  box/.style={rounded corners=4pt, align=center, inner sep=6pt, draw, line width=0.7pt},
  sup/.style={box, fill=supfill, draw=supline},
  sub/.style={box, fill=subfill, draw=subline},
  neu/.style={box, fill=neufill, draw=neuline},
  karr/.style={-{Stealth[length=2.8mm]}, line width=0.9pt, black!70},
  darr/.style={-{Stealth[length=2.8mm]}, line width=0.8pt, black!55, dashed}
]
\begin{scope}
  \node[align=center,anchor=west] at (-0.1,4.55) {\textbf{Subcritical phase ($w_t<2$)}};
  \fill[cblue!7]  (0,0) \foreach \u in {0.1,0.2,...,1.0}{ -- (5.2*\u,{3.7*\u*\u}) } -- (0,3.7) -- cycle;
  \fill[corange!8] (0,0) \foreach \u in {0.1,0.2,...,1.0}{ -- (5.2*\u,{3.7*\u*\u}) } -- (5.2,0) -- cycle;
  \draw[black!35, line width=0.7pt]
    plot[domain=0:1, samples=60] ({5.2*\x},{3.7*\x*\x});
  \node[black!45, rotate=52, anchor=south] at (3.95,2.05) {\scriptsize $\beta_2=\beta_1^2$};
  \draw[-{Stealth[length=2.4mm]}, line width=0.8pt] (0,0) -- (5.6,0)
      node[below left=1pt and -14pt]{\footnotesize $\beta_1$};
  \draw[-{Stealth[length=2.4mm]}, line width=0.8pt] (0,0) -- (0,4.0)
      node[left=2pt,pos=0.93]{\footnotesize $\beta_2$};
  \fill[corange] (4.68,3.66) circle (0.07);
  \node[anchor=north east, black!60] at (4.66,3.56) {\scriptsize $(0.9,\,0.999)$};
  \begin{scope}[shift={(0.42,1.86)}]
    \draw[line width=0.55pt, black!60] (0,0) -- (2.55,0);
    \draw[line width=0.55pt, black!60] (0,0) -- (0,1.72);
    \draw[dashed, corange, line width=1pt] (0,1.34) -- (2.45,1.34);
    \node[anchor=west, corange] at (1.62,1.52) {\tiny $w=2$};
    \draw[dash dot, cblue!75!black, line width=0.8pt] (0,1.06) -- (2.45,1.06);
    \node[anchor=west, cblue!75!black] at (2.42,1.06) {\tiny $\overline W$};
    \draw[cblue, line width=1.1pt]
      plot[smooth] coordinates {(0.10,0.22) (0.55,0.42) (1.05,0.70) (1.55,0.92) (2.05,1.02) (2.42,1.05)};
    \node[anchor=west, black!60] at (-0.02,-0.24)
      {\scriptsize contraction pushes $w_t$ up to $\overline W$};
    \node[anchor=west, black!60] at (-0.02,-0.58)
      {\scriptsize (Prop.~\ref{prop:Psi-decay}, Cor.~\ref{cor:Psi-passage-informal})};
  \end{scope}
  \node[align=left, anchor=west] at (2.62,0.66)
    {\subtxt{four-cycles exist:}\\[-2pt]
     \subtxt{(Prop.~\ref{prop:four-cycle})}};
\end{scope}
\begin{scope}[shift={(7.6,0.2)}]
  \node[align=center,anchor=west] at (-0.1,4.35) {\textbf{Supercritical phase ($w_t>2$)}};
  \node[neu] (root) at (1.05,1.9) {sign of\\ $x_t(m_t{+}cx_t)$};
  \node[sup] (al) at (3.75,3.05) {aligned\\ $x_t(m_t{+}cx_t)>0$};
  \node[sub] (mis) at (3.75,0.75) {misaligned\\ $x_t(m_t{+}cx_t)<0$};
  \begin{scope}[shift={(5.85,2.42)}]
    \draw[line width=0.6pt, black!60] (0,0) -- (2.45,0);
    \draw[line width=0.6pt, black!60] (0,0) -- (0,1.55);
    \draw[dashed, corange, line width=1pt] (0,0.52) -- (2.35,0.52);
    \node[anchor=west, corange] at (1.28,0.30) {\tiny $w=2$};
    \draw[cblue, line width=1.2pt]
      plot[smooth] coordinates {(0.18,1.38) (0.75,0.98) (1.4,0.66) (2.1,0.55)};
  \end{scope}
  \node[anchor=west] at (5.80,2.12)
    {\subtxt{finite exit back to the edge}};
  \node[anchor=west] at (5.80,1.82)
    {\subtxt{(Lem.~\ref{lem:super}, Cor.~\ref{cor:super-blowup})}};
  \node[align=left, anchor=west] (mistext) at (5.55,0.75)
    {\subtxt{$|x_t|$ decays exponentially,}\\[-1pt]
     \subtxt{converging while supercritical}\\[-1pt]
     \subtxt{(Prop.~\ref{prop:persistent-misalignment}; Fig.~\ref{fig:misaligned-example})}};
  \draw[karr] (root) -- (al);
  \draw[karr] (root) -- (mis);
  \draw[karr] (al.east) -- (5.75,3.05);
  \draw[karr] (mis.east) -- (mistext.west);
  \draw[darr] (mis.north) to[out=90,in=-90]
    node[right=2pt,pos=0.22]{\subtxt{generic perturbations re-align}} (al.south);
\end{scope}
\end{tikzpicture}}
  \caption{Proof roadmap for $\beta_1>0$.  Left: below the edge, an
  adaptive Lyapunov function certifies contraction up to a cutoff
  $\overline W<2$ when $\beta_2>\beta_1^2$
  (Proposition~\ref{prop:Psi-decay},
  Corollary~\ref{cor:Psi-passage-informal}), while strictly subcritical
  four-cycles can arise for certain complementary parameter choices when
  $\varepsilon=0$ (Proposition~\ref{prop:four-cycle}).  Right: above the
  edge, the sign of $x_th_t$ separates aligned trajectories, which exit
  every fixed supercritical band in finite time, from exceptional
  persistently misaligned trajectories that converge while remaining
  uniformly supercritical (Lemma~\ref{lem:super},
  Corollary~\ref{cor:super-blowup},
  Proposition~\ref{prop:persistent-misalignment}).}
  \label{fig:proof-roadmap}
\end{figure}

\section{Mathematical formulation and the case of $\beta_1=0$}
\subsection{The stability threshold}
\label{sec:adam-1d}

To see where the threshold comes from, freeze the second moment in
\eqref{eq:adam-1d} at a constant value $v\ge0$. Define
\begin{equation}
c:=\frac{1-\beta_1}{1+\beta_1},\qquad
w_t:=\frac{c\eta}{\sqrt{v_t}+\varepsilon},\qquad
w_{\max}:=\frac{c\eta}{\varepsilon}.
\label{eq:normalized-sharpness}
\end{equation}
Then $0<w_t\le w_{\max}$, and stability of the frozen update is decided
by this scalar alone:

\begin{lemma}[Frozen stability in one dimension]\label{lem:frozen-1d}
Fix $v\ge0$ and let $\cA_v$ denote the map
$(x_t,m_t)\mapsto(x_{t+1},m_{t+1})$ obtained from \eqref{eq:adam-1d} by
freezing $v_{t+1}$ at $v$.  Then $\cA_v$ is linear, and its spectral
radius (the largest modulus of its eigenvalues) is smaller than $1$ if
and only if
\[
w:=\frac{c\eta}{\sqrt{v}+\varepsilon}<2 .
\]
\end{lemma}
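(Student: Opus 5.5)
With $v_{t+1}$ frozen at $v$, set $a:=\eta/(\sqrt v+\varepsilon)>0$. The update \eqref{eq:adam-1d} becomes
\[
m_{t+1}=\beta_1 m_t+(1-\beta_1)x_t,\qquad x_{t+1}=x_t-a\,m_{t+1}=(1-a(1-\beta_1))x_t-a\beta_1 m_t .
\]
Both coordinates are linear in $(x_t,m_t)$, so $\cA_v$ is linear with matrix
\[
A=\begin{pmatrix} 1-a(1-\beta_1) & -a\beta_1\\ 1-\beta_1 & \beta_1\end{pmatrix}.
\]
The plan is to compute its characteristic polynomial and then apply the Schur--Cohn (Jury) criterion for a real monic quadratic.

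First, $\operatorname{tr}A=1+\beta_1-a(1-\beta_1)$ and $\det A=\beta_1-a\beta_1(1-\beta_1)+a\beta_1(1-\beta_1)=\beta_1$. The characteristic polynomial is therefore
\[
p(\lambda)=\lambda^2-T\lambda+D,\qquad T=1+\beta_1-a(1-\beta_1),\quad D=\beta_1 .
\]

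Next, I use the standard fact that both roots of a real $\lambda^2-T\lambda+D$ lie in the open unit disk iff $|D|<1$ and $|T|<1+D$. For completeness, this can be checked directly. If the roots are complex conjugate, their modulus is $\sqrt D$, and $|T|<2\sqrt D\le 1+D$ holds automatically. If the roots are real, the condition is equivalent to $p(1)>0$, $p(-1)>0$, and the vertex $T/2$ lying in $(-1,1)$, which follows from $|T|<1+D<2$.

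Here $D=\beta_1\in[0,1)$, so $|D|<1$ always holds, and it remains to check $|T|<1+\beta_1$. The upper inequality $T<1+\beta_1$ is $a(1-\beta_1)>0$, which is automatic since $a>0$ and $\beta_1<1$; equivalently $p(1)=a(1-\beta_1)>0$. The lower inequality $T>-(1+\beta_1)$ reads $a(1-\beta_1)<2(1+\beta_1)$, i.e.
\[
w=\frac{c\eta}{\sqrt v+\varepsilon}=\frac{1-\beta_1}{1+\beta_1}\,a<2 .
\]
Hence the spectral radius is less than $1$ iff $w<2$.

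The only delicate point is the boundary case $w=2$. There $p(-1)=0$, so $\lambda=-1$ is an eigenvalue and the spectral radius is at least $1$, which is consistent with the strict inequality in the statement. The degenerate case $\beta_1=0$, where $D=0$ and one root is $0$, is covered by the same criterion and needs no separate treatment.
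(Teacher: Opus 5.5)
Your proof is correct and takes essentially the paper's route. The paper proves the $d$-dimensional Lemma~\ref{lem:frozen} by symmetrizing coordinates and reducing each eigenmode to the characteristic polynomial $z^2-\bigl(1+\beta_1-\eta(1-\beta_1)\lambda\bigr)z+\beta_1$, then applies the quadratic Jury criterion; in one dimension, with $\lambda=1/(\sqrt v+\varepsilon)$, this is exactly your direct $2\times2$ computation (your explicit check of the Jury conditions and of the boundary case $w=2$ is simply extra detail).
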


The lemma is the one-dimensional case of Lemma~\ref{lem:frozen} in
Appendix~\ref{app:general-adam}, which treats general objectives in any
dimension.  Thus $w=2$ is the frozen stability boundary, free of all
parameters.  In practice
$\varepsilon$ is tiny (e.g.\ $10^{-8}$), so $w_{\max}>2$ in essentially
every realistic configuration; this is the regime we study, and the
globally subcritical case $w_{\max}<2$ is treated in
Appendix~\ref{appdex:sec:w_max<2}.

Note that Schur stability of
the frozen iteration is only a pointwise criterion: $v_t$, and hence
$w_t$, evolves along the trajectory, so it does not by itself determine
the stability of the adaptive dynamics---yet our results show that this
local boundary is precisely the level around which $w_t$ oscillates.
In the general setting of neural-network training, the frozen operator,
the preconditioned sharpness $\PS_t$, and its threshold $\PS^\star$ are
defined analogously in Appendix~\ref{app:general-adam}.

\subsection{Illustrative example: \texorpdfstring{$\beta_1=0$}{beta1=0} case}
\label{sec:zero-momentum}

As an illustrative example, in this section we focus on the case $\beta_1=0$, in which Adam reduces to RMSProp \citep{tieleman2012rmsprop}.  This is the simpler case because the momentum variable disappears from the position recursion and $c=1$. Adam then reduces to
\begin{equation}
    v_{t+1}=\beta_2v_t+(1-\beta_2)x_t^2, \qquad
    x_{t+1}=(1-w_{t+1})x_t,
\label{eq:zero-momentum-dynamics}
\end{equation}
where we recall $w_t=\frac{\eta}{\sqrt{v_t}+\varepsilon}$. \citet{bai2026adaptivepreconditionerstriggerloss} studied a related threshold-crossing mechanism for a momentum-free one-dimensional quadratic under additional conditions. Appendix~\ref{proof:zero momentum} strengthens this intuition through a two-sided finite-passage result: for every
$\underline w<2<\overline w$, a trajectory cannot remain indefinitely
below $\underline w$ or above $\overline w$.  Since these levels may be chosen arbitrarily
close to $2$, the result formalizes the restoring behavior toward the
edge.  If $w_t$ remains uniformly below $2$, the position contracts,
reducing the forcing of $v_t$ and pushing $w_t$ upward; if it remains
uniformly above $2$, the reverse mechanism pushes $w_t$ downward.
This is the feedback illustrated in Figure~\ref{fig:mechanism-beta0}. The positive-momentum analysis in
Section~\ref{sec:positive-momentum} follows the same high-level
strategy, but requires an adaptive Lyapunov argument below the edge and
sign geometry above it.

\section{Analysis of Adam with general parameters}
\label{sec:positive-momentum}

The main focus of this paper is the positive-momentum regime, corresponding to the practical choice $(\beta_1,\beta_2)=(0.9,0.999)$ of Adam. In this regime, we identify subtler EoS behavior of Adam, which requires more careful analysis. The frozen threshold is still $w=2$, but momentum changes what happens near it.  In the subcritical regime, we need to control the joint evolution of position and momentum; in the supercritical regime, the key distinction is whether the two are aligned or misaligned. All proofs for this
section are deferred to Appendix~\ref{sec:technical-tools} (subcritical
results) and Appendix~\ref{app:supercritical-proofs} (supercritical
results).

\paragraph{Normalized dynamics and phase decomposition}
Introduce the normalized momentum coordinate
\begin{equation}
    h_t:=c^{-1}m_t+x_t,
    \qquad z_t:=\binom{x_t}{h_t}.
\label{eq:M1-normalized-state}
\end{equation}
Then, the update rule on $(z_t)$ can be written as
\begin{equation}
    z_{t+1}=A(w_{t+1})z_t,
    \qquad
    A(w):=
    \begin{pmatrix}
        1-w&-\beta_1w\\
        2-w&\beta_1(1-w)
    \end{pmatrix}.
\label{eq:M1-state-matrix}
\end{equation}
Since $A(w)$ is similar to the frozen map of
Lemma~\ref{lem:frozen-1d}, its spectral radius satisfies $\rho(A(w))<1$
exactly when $0<w<2$.  The matrices encountered along an
Adam trajectory, however, vary with $w_t$ and need not commute.  Stability
of every individual matrix is therefore not sufficient to control their
product.  Our subcritical argument uses the exact restriction imposed on
successive values of $w_t$ by the second-moment recursion, whereas the
supercritical argument uses the sign geometry of $(x_t,h_t)$.

\subsection{Subcritical analysis}
\label{sec:subcritical-analysis}

In the subcritical regime, decay of $(x_t,m_t)$ reduces the forcing in the
second-moment recursion.  The resulting decrease of $v_t$ raises $w_t$ and
therefore moves the trajectory toward the frozen boundary.  The principal
difficulty is that the family $\{A(w):0<w<2\}$ does not admit a common
Euclidean contraction estimate.  We instead use the Lyapunov functional
\begin{align}
    \Psi_t:= x_t^2+\frac{\beta_1 w_t}{2-w_t}\cdot h_t^2.
\label{eq:Psi-definition}
\end{align}
The coefficient of $h_t^2$ is chosen as a function of the current $w_t$. The exact
one-step quadratic inequality, together with the admissible change in
$w_t$, gives the following cutoff:
\begin{equation}
   \overline{W}
    :=\frac{2(\sqrt{\beta_2}-\beta_1)}
    {\sqrt{\beta_2}(1-\beta_1)
     -2\beta_1(1-\sqrt{\beta_2})/w_{\max}}.
\label{eq:W-bar}
\end{equation}
\begin{proposition}\label{prop:Psi-decay}
Assume $w_{\max}>2$ and $\beta_2>\beta_1^2$.  Then $0<\overline W<2$;
if $\Psi_t>0$ and $w_t,w_{t+1}<\overline W$, then $\Psi_{t+1}<\Psi_t$.
More quantitatively, for every $0<a\le w<\overline W$, if
$a\le w_t,w_{t+1}\le w$ then
$\Psi_{t+1}\le\bigl(1-\delta(a,w)\bigr)\Psi_t$, where the explicit
quantity $\delta(a,w)\in(0,1)$ is defined in
Appendix~\ref{sec:technical-tools}.
\end{proposition}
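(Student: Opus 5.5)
The plan is to turn the one-step decrease of $\Psi$ into positive definiteness of an explicit $2\times2$ quadratic form. Write $k(w):=\beta_1 w/(2-w)$ and $P(w):=\mathrm{diag}(1,k(w))$, so that $\Psi_t=z_t^\top P(w_t)z_t$. By \eqref{eq:M1-state-matrix}, $z_{t+1}=A(w_{t+1})z_t$, hence
\[
\Psi_t-\Psi_{t+1}=z_t^\top\bigl(P(w_t)-A(w_{t+1})^\top P(w_{t+1})A(w_{t+1})\bigr)z_t .
\]
It therefore suffices to show that this matrix is positive definite whenever $w_t,w_{t+1}<\overline W$. The quantitative version will follow from a uniform lower bound on its smallest eigenvalue relative to $P(w_t)$.

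\textbf{Step 1: the frozen form.} First I would compute $A(w)^\top P(w)A(w)$ for a single $w$. Direct expansion gives:
\begin{itemize}
\item an $x^2$ coefficient $1-(1-\beta_1)w(2-w)$;
\item an $h^2$ coefficient $k(w)\,\beta_1\bigl[w(2-w)+\beta_1(1-w)^2\bigr]$;
\item a cross term $-2\beta_1(1-\beta_1)w(1-w)\,xh$.
\end{itemize}
The choice $k(w)=\beta_1w/(2-w)$ is exactly what makes the $x^2$ coefficient collapse to this clean form. Both diagonal contractions are strictly below $1$ for $0<w<2$; the cross term is what must be absorbed.

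\textbf{Step 2: bounding the change of the weight.} Next I would control the mismatch between $k(w_t)$ in $\Psi_t$ and $k(w_{t+1})$ in $\Psi_{t+1}$. The second-moment recursion gives $v_{t+1}\ge\beta_2 v_t$, so $\sqrt{v_{t+1}}\ge\sqrt{\beta_2}\sqrt{v_t}$. In terms of $1/w=(\sqrt v+\varepsilon)/(c\eta)$ this reads
\[
\frac1{w_{t+1}}\ \ge\ \frac{\sqrt{\beta_2}}{w_t}+\frac{1-\sqrt{\beta_2}}{w_{\max}} .
\]
Thus $w_t\ge\varphi(w_{t+1})$ for an explicit increasing Möbius map $\varphi$. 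Since $k$ is increasing, $\Psi_t\ge x_t^2+k(\varphi(w_{t+1}))h_t^2$, and the worst case for decay is $w_t=\varphi(w_{t+1})$. The problem then reduces to a one-parameter family in $u:=w_{t+1}$. We need
\[
Q(u):=\mathrm{diag}\bigl(1,k(\varphi(u))\bigr)-A(u)^\top P(u)A(u)\succ0 ,
\]
which amounts to two conditions: positive diagonal entries, and $\det Q(u)>0$. The upper bound $w_{t+1}<\overline W$ enters only here, while $w_t<\overline W<2$ keeps $k(w_t)$ finite and positive.

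\textbf{Step 3: the determinant, which is the main obstacle.} I expect the hard part to be showing that $\det Q(u)>0$ exactly for $u<\overline W$, with $\overline W$ as in \eqref{eq:W-bar}. My first attempt would be to clear the denominators $(2-u)(2-\varphi(u))$ and substitute $\varphi(u)=u\big/\bigl(\sqrt{\beta_2}+(1-\sqrt{\beta_2})u/w_{\max}\bigr)$. I would then hope that the factor $u(1-\beta_1)^2$ (note the cross term is proportional to $(1-\beta_1)$) splits off, leaving a factor linear in $u$ whose root is $\overline W$. Along the way I would verify the elementary facts $0<\overline W<2$. The numerator is positive because $\beta_2>\beta_1^2$ means $\sqrt{\beta_2}>\beta_1$. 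The inequality $\overline W<2$ is equivalent, after rearranging, to $\sqrt{\beta_2}-\beta_1<\sqrt{\beta_2}(1-\beta_1)-2\beta_1(1-\sqrt{\beta_2})/w_{\max}$, which reduces to $\beta_1(1-\sqrt{\beta_2})(1-2/w_{\max})>0$ and holds since $w_{\max}>2$. If the clean factorization fails, the fallback is to bound the cross term by AM--GM with a $u$-dependent weight and check the two resulting diagonal inequalities separately.

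\textbf{Step 4: the quantitative constant.} Finally, for $a\le w_t,w_{t+1}\le w<\overline W$, the entries of $Q$ and $P(w_t)$ are continuous on a compact parameter set on which $Q\succ0$. Hence $\lambda_{\min}\bigl(P(w_t)^{-1/2}QP(w_t)^{-1/2}\bigr)$ has a positive minimum, which I would take as $\delta(a,w)$ (made explicit through the diagonal and determinant bounds). This gives $\Psi_{t+1}\le(1-\delta)\Psi_t$, and $\delta<1$ because $\Psi_{t+1}\ge0$. The strict inequality $\Psi_{t+1}<\Psi_t$ under $\Psi_t>0$ follows from $Q\succ0$ and $z_t\neq0$.
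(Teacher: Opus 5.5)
Your outline matches the paper's proof. You write $\Psi_t-\Psi_{t+1}$ as the quadratic form of $P(w_t)-A(w_{t+1})^\top P(w_{t+1})A(w_{t+1})$, bound $w_t$ below in terms of $w_{t+1}$ via the second-moment recursion, and use monotonicity of $y\mapsto y/(2-y)$ to pass to the worst case. Sylvester's criterion then yields $\overline W$, and $\delta(a,w)$ comes from a uniform eigenvalue bound; the paper makes this explicit via $\lambda_{\min}\ge\det/\operatorname{tr}$. Your Step~1 entries and your proof of $0<\overline W<2$ are correct.

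There is one slip in Step~3 that would derail the computation if carried out as written. The map you substitute, $u/\bigl(\sqrt{\beta_2}+(1-\sqrt{\beta_2})u/w_{\max}\bigr)$, is the \emph{forward} bound $w_{t+1}\le\Phi(w_t)$, not the inverse you defined in Step~2. Solving your own Step~2 inequality for $w_t$ gives
\[
    \varphi(u)=\frac{\sqrt{\beta_2}\,u}{1-(1-\sqrt{\beta_2})u/w_{\max}}<u .
\]
With your formula you would be asserting $w_t\ge\Phi(w_{t+1})>w_{t+1}$, i.e.\ that $w_t$ never increases. That is false, e.g.\ when $x_t=0$. The determinant condition would then hold automatically whenever $\Phi(u)<2$, so no cutoff $\overline W$ would emerge.

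With the correct $\varphi$, the factorization you hope for is exact. It is easiest to see from the following identity, valid for arbitrary weights $y,Y>0$:
\[
    \det\bigl(\operatorname{diag}(1,y)-A(u)^\top\operatorname{diag}(1,Y)A(u)\bigr)
    =\bigl(u-(2-u)Y\bigr)\bigl((2-u)y-\beta_1^2u\bigr).
\]
Taking $Y=k(u)$ makes the first factor equal to $(1-\beta_1)u$. With $y=k(\varphi(u))$ and $\theta(y)=y/(2-y)$, the second factor is positive iff $\beta_1\theta(u)<\theta(\varphi(u))$. Since
$\theta(u)/\theta(\varphi(u))=\bigl[2-u\bigl(\sqrt{\beta_2}+2(1-\sqrt{\beta_2})/w_{\max}\bigr)\bigr]/\bigl[\sqrt{\beta_2}(2-u)\bigr]$,
this is a linear inequality in $u$ whose root is exactly $\overline W$.

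Finally, $\Psi_{t+1}\ge0$ only gives $\delta\le1$. Strictness comes from $\det A(u)=\beta_1>0$, which makes $A(u)^\top P(u)A(u)\succ0$. Hence the decrement matrix is strictly smaller than $P(w_t)$.
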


When $1-\beta_2\ll1-\beta_1$ and $\varepsilon/(c\eta)=1/w_{\max}$ is
negligible, the certified contraction region reaches close to the frozen
boundary: for $\beta_1=0.9$ and $\beta_2=0.999$ the cutoff is
$\overline W\approx1.991$.  An expansion of $\overline W$ as $\beta_2\to1$
is given in \eqref{eq:W-bar-expansion} of
Appendix~\ref{sec:technical-tools}.

Because $\Psi_t$ keeps decreasing, the trajectory cannot stay below any
fixed level $w<\overline W$ forever.
\begin{corollary}[Finite passage toward the subcritical cutoff]
\label{cor:Psi-passage-informal}
Assume the hypotheses of Proposition~\ref{prop:Psi-decay}. Then, for any $w<\overline{W}$ and $T\in\Nzero$ such that $w_T\leq w$, there exists $\tau<\infty$ such that $w_{T+\tau}>w$. 
\end{corollary}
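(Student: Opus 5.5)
Suppose, for contradiction, that $w_{T+\tau}\le w$ for every $\tau\ge0$; without loss of generality we may enlarge $w$ slightly so that still $w<\overline W$. We need a lower bound $a>0$ on $w_t$. Since $v_{t+1}=\beta_2 v_t+(1-\beta_2)x_t^2$ and $w_t=c\eta/(\sqrt{v_t}+\varepsilon)$, the assumption $w_t\le w$ for all $t\ge T$ gives $v_t\ge v_*:=(c\eta/w-\varepsilon)^2>0$ along the tail. This is only a lower bound on $v_t$, however. For the lower bound on $w_t$ we need an upper bound on $v_t$, and that follows once $x_t$ is bounded. So the plan is to first get boundedness from $\Psi$ and then close the loop.

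\textbf{Step 1 (monotonicity of $\Psi$).} If $\Psi_T=0$ then $x_T=h_T=0$ (since $w_T\in(0,2)$ makes the coefficient of $h_T^2$ positive). Then $x_t\equiv0$ afterwards, so $v_t=\beta_2^{t-T}v_T\to0$ and $w_t\to w_{\max}>2>w$, a contradiction. Hence $\Psi_T>0$. By Proposition~\ref{prop:Psi-decay}, using $w_t,w_{t+1}\le w<\overline W$, $\Psi_t$ is strictly decreasing on the tail, and it stays positive by the same argument applied at each $t$. In particular $x_t^2\le\Psi_t\le\Psi_T$ for $t\ge T$. Then $v_t\le\max(v_T,\Psi_T)$, so $w_t\ge a:=c\eta/(\sqrt{\max(v_T,\Psi_T)}+\varepsilon)>0$ for all $t\ge T$.

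\textbf{Step 2 (geometric decay).} Now $a\le w_t,w_{t+1}\le w$ for all $t\ge T$, so the quantitative part of Proposition~\ref{prop:Psi-decay} gives $\Psi_{t+1}\le(1-\delta(a,w))\Psi_t$. Hence $\Psi_t\le(1-\delta)^{t-T}\Psi_T\to0$, and in particular $x_t^2\to0$ geometrically.

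\textbf{Step 3 (contradiction).} Unrolling the recursion,
\[
v_t=\beta_2^{t-T}v_T+(1-\beta_2)\sum_{s=T}^{t-1}\beta_2^{t-1-s}x_s^2 .
\]
This is a geometric discount of a sequence tending to zero, so $v_t\to0$. Then $w_t\to w_{\max}>2>w$, contradicting $w_t\le w$ for all large $t$.

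The only delicate step is Step 1, namely getting the a priori lower bound $a$ on $w_t$ that the quantitative decay rate needs. The qualitative decrease $\Psi_{t+1}<\Psi_t$ supplies the bound on $|x_t|$, and hence on $v_t$, which breaks the apparent circularity. The remaining steps are routine.
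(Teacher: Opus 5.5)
Your proof is correct and follows the paper's argument. You assume $w_{T+n}\le w$ for all $n$, use $x_t^2\le\Psi_t$ to get $v_t\le\max\{v_T,\Psi_T\}$ and hence the floor $a$, apply the contraction $\Psi_{T+n}\le(1-\delta(a,w))^n\Psi_T$, and unroll the second-moment recursion to force $w_t\to w_{\max}>2>w$. The paper gets the bound on $v_t$ and the geometric decay of $\Psi_t$ in a single induction, since the quantitative contraction holds trivially when $z_t=0$, so your separate $\Psi_T=0$ case, the preliminary qualitative-decrease step, and the enlargement of $w$ are harmless but unnecessary.
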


A quantitative version of Corollary~\ref{cor:Psi-passage-informal} is stated in Corollary~\ref{cor:Psi-passage}.

\subsubsection{Existence of subcritical cycles}

Proposition~\ref{prop:Psi-decay} gives a sufficient contraction region for
$\beta_2>\beta_1^2$ which is indeed the regime of the standard parameter choice $(\beta_1,\beta_2)=(0.9,0.999)$. In the complementary parameter range, strictly
subcritical cycling can occur.
\begin{proposition}[Existence of strictly subcritical four-cycles]
\label{prop:four-cycle}
In the scale-free case $\varepsilon=0$, suppose that
\begin{equation}
    \sqrt2-1\le\beta_1<1,
    \qquad 0\le\beta_2\le\beta_1^2.
\label{eq:four-cycle-parameters}
\end{equation}
Then, for every $\eta>0$, there exists an initial state
$(x_0,m_0,v_0)$ whose Adam orbit has prime period four and satisfies
$w_t<2$ for every $t\ge0$.
\end{proposition}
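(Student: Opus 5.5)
The plan is to exploit the scale invariance available when $\varepsilon=0$ and look for an orbit with a \emph{sign-flip symmetry of period two}. When $\varepsilon=0$, the map $(x,m,v)\mapsto(\lambda x,\lambda m,\lambda^2 v)$ commutes with \eqref{eq:adam-1d}, and in particular leaves $w=c\eta/\sqrt v$ unchanged. So it suffices to find an initial state with
\[
z_{t+2}=-z_t,\qquad v_{t+2}=v_t .
\]
Then $w_t$ alternates between two values $w_a,w_b$, and the orbit in $(x,m,v)$ has period four. It has \emph{prime} period four as soon as $z_0\ne0$: the relation $z_2=-z_0\neq z_0$ rules out periods one and two. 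We then only need $w_a,w_b<2$.

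\textbf{Step 1 (linear condition).} In the normalized coordinates \eqref{eq:M1-state-matrix}, the requirement is that $-1$ be an eigenvalue of $P:=A(w_b)A(w_a)$, with $z_0$ the corresponding eigenvector. Since $\det A(w)=\beta_1(1-w)^2+\beta_1 w(2-w)=\beta_1$, we have $\det P=\beta_1^2$. Hence $-1$ is an eigenvalue exactly when
\[
\operatorname{tr}P=-(1+\beta_1^2).
\]
Because $A(w)$ is affine in $w$, $\operatorname{tr}P$ is a bilinear polynomial $Q(w_a,w_b)$, so this is one explicit bilinear equation. The diagonal $w_a=w_b$ cannot work: $A(w)$ would need an eigenvalue $\pm i$, which is impossible since $\det A(w)=\beta_1<1$. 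We therefore seek asymmetric solutions.

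For each $w_a$ we solve $Q=-(1+\beta_1^2)$ for $w_b$ and track the curve $\Gamma$ of solutions inside $(0,2)^2$. I expect the condition $\beta_1\ge\sqrt2-1$, equivalently $\beta_1^2+2\beta_1\ge1$, to be exactly what makes $\Gamma$ meet the open square.

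\textbf{Step 2 (second-moment consistency).} Along $\Gamma$, the eigenvector fixes the ratio $r=x_1/x_0$, where $x_1=(1-w_a)x_0-\beta_1 w_a h_0$. The period-two condition on $v$, namely $v_1=\beta_2v_0+(1-\beta_2)x_0^2$ and $v_0=\beta_2 v_1+(1-\beta_2)x_1^2$, gives
\[
v_0=\frac{(1-\beta_2)(\beta_2x_0^2+x_1^2)}{1+\beta_2},\qquad
v_1=\frac{(1-\beta_2)(x_0^2+\beta_2x_1^2)}{1+\beta_2}.
\]
The labelling is such that $w_{t+1}$ uses $v_{t+1}$: $w_a$ corresponds to $v_1$ and $w_b$ to $v_2=v_0$. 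Consistency with $w=c\eta/\sqrt v$ then reduces to the single scale-free equation
\[
F:=\Bigl(\frac{w_b}{w_a}\Bigr)^2-\frac{x_0^2+\beta_2 r^2x_0^2}{\beta_2x_0^2+r^2x_0^2}=0 .
\]
Once $F=0$, the overall scale of $x_0$ is chosen so that $c\eta/\sqrt{v_1}=w_a$, which works for every $\eta>0$. We also set $m_0=c(h_0-x_0)$.

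\textbf{Step 3 (main obstacle: existence of a zero of $F$ on $\Gamma\cap(0,2)^2$).} Here I would parametrize $\Gamma$ by $w_a$ and evaluate $F$ near the two ends of the admissible arc. Near the end where the solution approaches the diagonal or the boundary $w=2$, the ratio $r^2$ tends to an extreme value. The fraction $(1+\beta_2r^2)/(\beta_2+r^2)$ lies between $\beta_2$ and $1/\beta_2$. So the sign comparison should reduce to comparing $(w_b/w_a)^2$ with $\beta_2^{\pm1}$ and $r^2$ with $\beta_1^{\pm2}$ at the endpoints.

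I expect $\beta_2\le\beta_1^2$ to be precisely the inequality that produces opposite signs of $F$ at the two ends. The intermediate value theorem then yields a zero with $w_a,w_b\in(0,2)$. The delicate parts are the explicit endpoint limits and the non-degeneracy of the eigenvector, i.e.\ checking that $x_0\neq0$ and that $\beta_2x_0^2+x_1^2>0$. I would handle these by direct computation with the explicit entries of $A(w)$, checking first the boundary case $\beta_1=\sqrt2-1$, $\beta_2=\beta_1^2$ to locate the endpoints.
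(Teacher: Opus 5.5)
Your Steps 1--2 are correct and use the same half-turn ansatz as the paper. The condition $\operatorname{tr}P=-(1+\beta_1^2)$ simplifies to $\Gamma=\{(1-w_a)(1-w_b)=-c^2\}$, and the paper's explicit cycle lies on this curve. There is one small slip: the period-two second moments are $v_1=(x_0^2+\beta_2x_1^2)/(1+\beta_2)$ and $v_0=(\beta_2x_0^2+x_1^2)/(1+\beta_2)$, with no factor $1-\beta_2$. This cancels in $F$ but changes the scale you solve for.

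The gap is Step 3, which is the whole existence argument, and both expectations you state about it are false. First, $\beta_1\ge\sqrt2-1$ has nothing to do with $\Gamma$ meeting the open square, since $(1-c,1+c)\in\Gamma\cap(0,2)^2$ for every $\beta_1\in(0,1)$. Second, the sign change of $F$ does not occur between the two ends of an admissible arc. Parametrize the arc $w_a<1<w_b$ by $s=r=x_1/x_0$. Then $w_a=\frac{1+\beta_1^2}{1+\beta_1}\frac{1-s}{1-\beta_1 s}$ and $w_b=\frac{1+\beta_1^2}{1+\beta_1}\frac{1+s}{\beta_1+s}$. The arc is $s^*<s<1$, where $s^*=\frac{1-2\beta_1-\beta_1^2}{1+2\beta_1-\beta_1^2}$ is the point with $w_b=2$. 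On it, $F=R(s)-\frac{1+\beta_2s^2}{\beta_2+s^2}$ with $R=(w_b/w_a)^2$.
\begin{itemize}
\item As $s\to1$ (so $w_a\to0$), $F\to+\infty$.
\item At $s=s^*$, $R=(1+\beta_1)^4/(4\beta_1^2)$ and $F\ge R-1/(s^*)^2$. For $\beta_1=0.9$ this is about $4.02-1.53>0$, for every $\beta_2$.
\end{itemize}
So both ends have the same sign. The other branch $w_a>1>w_b$ is just the one-step time shift of this one, and $F$ also has the same sign at both of its ends. The arc never approaches the diagonal either.

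The missing idea is to evaluate $F$ at the interior point $s=0$, i.e.\ $x_1=0$, where $(w_a,w_b)=\bigl(\frac{1+\beta_1^2}{1+\beta_1},\frac{1+\beta_1^2}{\beta_1(1+\beta_1)}\bigr)$.
\begin{itemize}
\item This point lies on the closed arc exactly when $w_b\le2$, i.e.\ $\beta_1^2+2\beta_1\ge1$. This is where $\beta_1\ge\sqrt2-1$ is used.
\item There $F=\beta_1^{-2}-\beta_2^{-1}$, which is $\le0$ exactly when $\beta_2\le\beta_1^2$.
\end{itemize}
Combined with $F\to+\infty$ as $s\to1$, the intermediate value theorem on $(0,1)$ gives the cycle when $\beta_2<\beta_1^2$. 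For $\beta_2=0$, use instead that $F\to-\infty$ as $s\downarrow0$; you must avoid $s=0$ there anyway, since it gives $v_0=0$.

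The equality case $\beta_2=\beta_1^2$ needs one more observation. $R$ is strictly decreasing at $0$ while $\frac{1+\beta_2s^2}{\beta_2+s^2}$ has zero derivative there, so $F<0$ just to the right of $0$. This matters at the corner $\beta_1=\sqrt2-1$, where $s=0$ itself gives $w_b=2$; for $\beta_1>\sqrt2-1$ the point $s=0$ is already a valid cycle.

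This is exactly the paper's argument. It works directly with $s$, rewrites $F=0$ as $\beta_2=G(s)$, and uses $G(0)=\beta_1^2$, $G(s_0)=0$ and $G'(0)>0$. Your prime-period argument ($z_2=-z_0\ne z_0$) is fine, and slightly cleaner than the paper's because it also covers $s=0$.
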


Figure~\ref{fig:four-cycle} in Appendix~\ref{sec:technical-tools}
displays the construction at a concrete point of the parameter region
\eqref{eq:four-cycle-parameters}.

\subsection{Supercritical analysis}

In the supercritical region $w>2$, the sign of $x_th_t$ separates two sharply
different behaviors.  When $x_t$ and $h_t$ are aligned, the next step
preserves alignment and expands the position.  This expansion feeds the
second-moment recursion, increases $v_t$, and therefore pushes $w_t$ back
toward the edge.  When they are misaligned, continued misalignment instead
forces the position to contract even though the frozen matrix is unstable.
The following lemma records the one-step geometry.
\begin{lemma}[Supercritical sign geometry]\label{lem:super}
Suppose that $w_{t+1}>2$.
\begin{enumerate}[label=\textup{(\roman*)}]
\item If $x_th_t>0$, then $x_{t+1}h_{t+1}>0$ and
\begin{equation}
    \abs{x_{t+1}}>(w_{t+1}-1)\abs{x_t}.
\label{eq:super-aligned-expansion}
\end{equation}
\item If $x_th_t<0$ and $x_{t+1}h_{t+1}<0$, then
\begin{equation}
    \abs{x_{t+1}}<
       \frac{\abs{x_t}}{w_{t+1}-1}.
\label{eq:super-misaligned-contraction}
\end{equation}
\end{enumerate}
\end{lemma}

In particular, an aligned trajectory that remains above a fixed margin
$2+\delta$ expands at least at rate $1+\delta$.  Such expansion cannot
persist indefinitely: it drives the second moment so high that $w_t$ can
no longer stay above $2+\delta$.  This yields a quantitative exit from
every fixed band above the edge.

\begin{corollary}[Finite exit from a uniformly supercritical band]
\label{cor:super-blowup}
Suppose that $x_Th_T>0$ and $w_T\ge2+\delta$ for some $\delta\in(0,1]$, and define
\begin{equation}
    \tau:=\min\{n\in\Npos:w_{T+n}<2+\delta\}.
\label{eq:super-exit-time-main}
\end{equation}
Then
\begin{equation}
\begin{aligned}
    \tau\le 1+\delta^{-1}\log\!\left(
        1+
        \frac{\delta(c\eta)^2}
        {(1-\beta_2)x_T^2}\right).
\end{aligned}
\label{eq:super-exit-bound-main}
\end{equation}
\end{corollary}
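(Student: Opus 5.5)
The plan is to combine the one-step expansion of Lemma~\ref{lem:super}(i) with the upper bound on $v_t$ that is forced by $w_t\ge 2+\delta$. Running the expansion on the whole window $[T,T+\tau-1]$ shows that $|x_t|$ grows geometrically. Feeding this growth into the second-moment recursion makes $v_t$ too large to be compatible with $w_t\ge2+\delta$ once $\tau$ exceeds the stated bound. We may assume $\tau\ge 2$, since otherwise there is nothing to prove.

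\emph{Step 1 (propagation of alignment and expansion).} By definition of $\tau$, we have $w_{T+k}\ge 2+\delta>2$ for $k=1,\dots,\tau-1$. Lemma~\ref{lem:super}(i) applies at times $t=T,T+1,\dots,T+\tau-2$, by induction, since alignment is preserved at each step. This gives $x_{T+k}h_{T+k}>0$ and
\[
|x_{T+k}|\ge (1+\delta)^k|x_T|,\qquad 0\le k\le \tau-1 .
\]
Note that $x_T\neq0$ because $x_Th_T>0$.

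\emph{Step 2 (the band forces $v$ to be small).} Next we translate $w_{T+n}\ge 2+\delta$ into a bound on $v$. From \eqref{eq:normalized-sharpness}, $\sqrt{v_{T+n}}\le c\eta/(2+\delta)-\varepsilon< c\eta/(2+\delta)$. Hence
\[
v_{T+n}<\frac{(c\eta)^2}{(2+\delta)^2}\qquad\text{for } n=1,\dots,\tau-1 .
\]

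\emph{Step 3 (the second moment is forced up).} The recursion in \eqref{eq:adam-1d} gives the lower bound
\[
v_{T+n}\ge (1-\beta_2)\sum_{j=0}^{n-1}\beta_2^{\,n-1-j}x_{T+j}^2 \ge (1-\beta_2)\,x_{T+n-1}^2 .
\]
Taking $n=\tau-1$ and inserting Step 1 yields
\[
(1-\beta_2)(1+\delta)^{2(\tau-2)}x_T^2<\frac{(c\eta)^2}{(2+\delta)^2}.
\]
Taking logarithms and using $2\log(1+\delta)\ge \delta$ for $\delta\in(0,1]$ gives a bound of the form $\tau\le 2+\delta^{-1}\log\bigl((c\eta)^2/(4(1-\beta_2)x_T^2)\bigr)$. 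This already proves a finite exit.

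\emph{Step 4 (main obstacle: matching the stated form).} The stated bound has $\log\bigl(1+\delta Q\bigr)$ with $Q=(c\eta)^2/((1-\beta_2)x_T^2)$. For small $\delta$, where $\delta Q\lesssim1$, this is strictly sharper than $\log(Q/4)$, since it is of order $Q$ rather than $\delta^{-1}\log Q$. The crude single-term bound of Step 3 is therefore not enough. I would instead keep the full sum in Step 3 and evaluate it at $n=\tau$, or at $n=\tau-1$ with the extra offset absorbed into the constant ``$1+$''. The aim is an inequality of the type
\[
\sum_{j=0}^{\tau-2}(1+\delta)^{2j}\ \gtrsim\ \frac{(1+\delta)^{2(\tau-1)}-1}{2\delta},
\]
which contains the additive ``$-1$'' and, after rearranging against the bound $(c\eta)^2/4$, produces $(1+\delta)^{2(\tau-1)}\le 1+\delta Q$. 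The difficulty is the discount factors $\beta_2^{\,n-1-j}$, which prevent a direct use of this geometric sum.

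To handle the discount, I would first try to use the exact update rather than the lower bound from Lemma~\ref{lem:super}(i). Iterating it gives $|x_{t+1}|\ge |x_t|+\delta|x_t|+(\text{extra from } h_t)$. I would then bound the per-step increment $v_{t+1}-v_t=(1-\beta_2)(x_t^2-v_t)$ from below using $v_t<(c\eta)^2/4$ together with the growing $x_t^2$. This yields an additive lower bound on the increase of $v$ that holds whenever $x_t^2$ exceeds the current $v_t$. If this fails, I will settle for the Step 3 bound with the logarithm's argument adjusted, checking the constants against \eqref{eq:super-exit-bound-main} at the end.
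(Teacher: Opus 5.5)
\textbf{Steps 1--3.} These are correct. State Step 3 as $(1-\beta_2)(1+\delta)^{2(n-1)}x_T^2<(c\eta)^2/(2+\delta)^2$ for every $1\le n<\tau$; this also rules out $\tau=\infty$. Write $Q:=(c\eta)^2/((1-\beta_2)x_T^2)$. Step 3 then gives $(1+\delta)^{2(\tau-1)}<(1+\delta)^2Q/(2+\delta)^2$. Since $e^{\delta}\le(1+\delta)^2$ on $(0,1]$, this already yields the stated bound whenever $(1+\delta)^2\le\delta(2+\delta)^2$, which holds for all $\delta\in[0.325,1]$.

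For smaller $\delta$ you have not proved the statement. Step~4 is only a plan, and its fallback is a different inequality. So, as written, the proposal has a gap.

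\textbf{The gap cannot be filled.} Your Step~4 target $(1+\delta)^{2(\tau-1)}\le1+\delta Q$ is exactly the paper's stopped estimate \eqref{eq:aligned-stopped-count} at $n=\tau-1$. Both that estimate and the corollary itself are false for small $\delta$.

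Take $\beta_1=\beta_2=0$, $\eta=1$ (so $c=1$), $\varepsilon=0.3$, $T=0$, $x_0=0.19$, $m_0=0$, $v_0=0$. Then $x_0h_0=x_0^2>0$, $w_0=10/3$, $v_{t+1}=x_t^2$, and $u_t:=\abs{x_t}$ obeys $u_{t+1}=F(u_t)$ with $F(u)=u/(u+0.3)-u$. On $[0,0.2]$ the map $F$ is increasing and concave, fixes $0.2$, and has $F'(0.2)=0.2$. Hence $0.2-u_t\ge0.01\cdot0.2^{t}$ and
\[
w_{t+1}-2=\frac{2(0.2-u_t)}{u_t+0.3}>0.04\cdot0.2^{t}.
\]
For $\delta:=0.04\cdot0.2^{27}$ this gives $w_1,\dots,w_{28}>2+\delta$, so $\tau\ge29$. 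The claimed bound, however, is below $1+Q=1+0.19^{-2}<28.71$.

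The stopped estimate fails much earlier. For $\delta\le0.04\cdot0.2^{13}$ it would require $28\delta\le(1+\delta)^{28}-1<27.71\delta$ at $n=14$.

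In this example the momentum term and the $\beta_2v_t$ term that the paper's ``direct induction'' discards vanish identically, so nothing was lost by discarding them. By continuity, the same failure persists for all sufficiently small $\beta_1,\beta_2>0$.

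The mechanism is that the trajectory converges, with $w_t\downarrow2$, to the period-two edge orbit $x_t=(-1)^t(c\eta/2-\varepsilon)$, $h_t=0$, $w_t\equiv2$. Along such a trajectory $\tau\to\infty$ as $\delta\to0$, whereas the stated right-hand side stays below $1+Q$. A correct quantitative statement is your Step~3 bound, of order $\delta^{-1}\log Q$. That bound still gives the paper's consequence $\liminf_t w_t=2$ for aligned trajectories that stay supercritical.
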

Consequently, if an aligned trajectory remains supercritical for all future
times, then $\liminf_{t\to\infty}w_t=2$.

\paragraph{The unstable convergence stage}
Part~(ii) of Lemma~\ref{lem:super} raises the question of whether
misalignment must eventually end.  The answer is no: for every initial $v_0$ with $w_0>2$, one can choose
the initial position and momentum so that the trajectory remains
misaligned forever.  It then
converges to the origin along an exceptional contracting trajectory even
though its frozen dynamics remain unstable.
\begin{proposition}[Persistent supercritical misalignment]
\label{prop:persistent-misalignment}
Suppose that $0<\beta_1<1$ and $w_{\max}>2$.  For every $v_0>0$ satisfying
$w_0=c\eta/(\sqrt{v_0}+\varepsilon)>2$, there exist $x_0\ne0$ and
$m_0\in\R$ such that
\begin{equation}
    x_t(m_t+cx_t)<0
    \qquad\text{for every }t\ge0.
\label{eq:persistent-misalignment-sign}
\end{equation}
Along this trajectory,
\begin{equation}
    w_t\ge w_0,
    \qquad
    \abs{x_t}\le(w_0-1)^{-t}\abs{x_0},
    \qquad t\ge0.
\label{eq:persistent-misalignment-decay}
\end{equation}
In particular, $(x_t,m_t,v_t)\to(0,0,0)$ and $w_t\to w_{\max}$,
although the trajectory remains uniformly supercritical for all time.
\end{proposition}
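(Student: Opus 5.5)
The plan is a shooting argument in the momentum coordinate $h_0$, with $x_0$ held fixed, combined with the one-step geometry of Lemma~\ref{lem:super}.

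\emph{Reduction.} Since $h_t=c^{-1}m_t+x_t$, we have $x_t(m_t+cx_t)=c\,x_th_t$. So \eqref{eq:persistent-misalignment-sign} is equivalent to $x_th_t<0$ for all $t$, and we may work with $z_t=(x_t,h_t)$ and \eqref{eq:M1-state-matrix}. The key structural fact is that $v_{t+1}=\beta_2v_t+(1-\beta_2)x_t^2$, and hence $w_{t+1}$, depends only on $(v_t,x_t)$ and not on $h_t$. Also $\det A(w)=\beta_1(1-w)^2+\beta_1w(2-w)=\beta_1>0$, so $A(w)$ is invertible and a trajectory with $x_0\neq0$ never reaches the origin.

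Fix $x_0>0$ with $x_0^2\le v_0$. Then $v_1\le v_0$, so $w_1\ge w_0>2$.

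\emph{Geometric step.} For any $w>2$, $A(w)$ sends the two boundary rays of the misaligned quadrant $\{x>0,h<0\}$ into opposite aligned quadrants:
\begin{itemize}
\item $(0,h)$ with $h<0$ maps to $(-\beta_1wh,\ \beta_1(1-w)h)$, which lies in the open first quadrant;
\item $(x,0)$ with $x>0$ maps to $((1-w)x,\ (2-w)x)$, which lies in the open third quadrant.
\end{itemize}
The same holds with all signs reversed for the quadrant $\{x<0,h>0\}$. A continuous curve from the first to the third quadrant that avoids the origin must cross the open second or fourth quadrant. These are exactly the misaligned quadrants.

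\emph{Nested intervals.} I will construct nested compact intervals $J_0\supseteq J_1\supseteq\cdots$ of values $h_0$ with the following properties:
\begin{itemize}
\item for $h_0\in J_N$, the point $z_N$ lies in the closure of a misaligned quadrant, and $z_t$ is strictly misaligned for $t<N$;
\item the curve $h_0\mapsto z_N$ joins the two boundary axes of that quadrant.
\end{itemize}
Take $J_0=[-H,0]$ with $H$ large, so that $z_0$ runs from far out on the negative $h$-axis to $(x_0,0)$.

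For the inductive step, the map $h_0\mapsto z_{N+1}=A(w_{N+1})z_N$ is continuous, because $w_{N+1}$ depends continuously on $x_N$. By the geometric step its endpoints lie in opposite aligned quadrants, provided $w_{N+1}>2$ along $J_N$, which is addressed in the next paragraph. Hence the curve crosses a misaligned quadrant, and I choose a subinterval $J_{N+1}$ on which $z_{N+1}$ sweeps across that quadrant from one axis to the other.

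Pick $h_0\in\bigcap_NJ_N$. If $z_t$ ever lay on an axis, it would be nonzero by invertibility, and the geometric step together with Lemma~\ref{lem:super}(i) would make $z_{t+1}$ strictly aligned. This contradicts $z_{t+1}$ lying in a closed misaligned quadrant. Hence $x_th_t<0$ for every $t$.

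\emph{Monotonicity and decay.} The hypothesis $w_{t+1}>2$ and the bound $w_t\ge w_0$ come from a joint induction. Suppose $x_t^2\le v_t$ and $|x_{t+1}|\le|x_t|$. Then
\[
v_{t+1}=\beta_2v_t+(1-\beta_2)x_t^2\in[x_t^2,\,v_t],
\]
so $v_{t+1}\le v_t$, giving $w_{t+1}\ge w_t\ge w_0>2$, and also $x_{t+1}^2\le x_t^2\le v_{t+1}$. Lemma~\ref{lem:super}(ii) then gives
\[
|x_{t+1}|<|x_t|/(w_{t+1}-1)\le|x_t|/(w_0-1).
\]
This closes the induction and yields \eqref{eq:persistent-misalignment-decay}.

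In the nested-interval construction the same bookkeeping is needed uniformly for all $h_0\in J_N$ up to time $N$. It holds there because those points are misaligned at every step up to $N$. Finally, since $x_t\to0$ geometrically and $\beta_2<1$, we get $v_t\to0$, hence $w_t\to w_{\max}$. Likewise $m_t=\beta_1m_{t-1}+(1-\beta_1)x_{t-1}\to0$.

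\emph{Main obstacle.} The delicate step is making the shooting argument rigorous. One must keep $w_{t+1}>2$ for every parameter in $J_N$, which is why the monotonicity induction has to be carried out inside the construction rather than after it. One must also ensure that the endpoints of each subinterval land on the axes, so that the next image again connects the first and third quadrants. I would handle both points by running the induction on the pair consisting of the interval and the hypothesis $x_t^2\le v_t$ together.
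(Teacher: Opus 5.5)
Your proposal is correct and is essentially the paper's proof: a shooting argument on the initial momentum with $x_0$ fixed in the box $x_0^2\le v_0$, nested parameter intervals on which $z_N$ sweeps a closed misaligned quadrant from one axis to the other, a point in the intersection, and Lemma~\ref{lem:super}(ii) for the decay. Your quadrant-crossing step is the planar form of the paper's ratio map $r_{t+1}=\Phi_{w_{t+1}}(r_t)$, whose limits $0$ and $+\infty$ at the ends of the strip are exactly your statement about the two boundary rays; and your invariant $x_t^2\le v_t$, in place of the paper's $x_t^2,v_t\le v_0$, even yields monotone $w_t$.

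Two small repairs are needed. First, $(x_0,-H)$ never lies on the $h$-axis, so the base case should instead use that $A(w_1)(x_0,-H)$ is in the open first quadrant once $H>x_0/\beta_1$; the paper avoids this by starting directly from $I_1=(\ell(w_1),q(w_1))$ in $r_0=-h_0/x_0$. Second, at the endpoints of $J_N$, where $z_N$ sits on an axis and Lemma~\ref{lem:super}(ii) does not literally apply, obtain $w_{N+1}\ge w_0>2$ either from continuity of $v_{N+1}$ in $h_0$ or from the non-strict bound $\abs{x_{t+1}}\le\abs{x_t}/(w_{t+1}-1)$ on the closed strip.
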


\section{Conclusion}

We studied the exact discrete dynamics of uncorrected Adam on a
one-dimensional quadratic.  Our analysis identifies a restoring mechanism
toward the frozen stability boundary.  With momentum, we establish this
behavior up to an explicit subcritical cutoff close to the threshold under
standard parameter regimes.  We also show its limitations through strictly
subcritical periodic orbits and persistently supercritical yet convergent
trajectories.  These results demonstrate that Adam can generate
edge-of-stability behavior intrinsically, while momentum introduces
dynamical mechanisms that can prevent universal convergence to the edge.
Extending this analysis to higher-dimensional settings is an important
direction for future work.

\bibliographystyle{plainnat}
\bibliography{references}

\appendix
\section{Related Work}

\paragraph{Edge of stability.}
The edge-of-stability (EoS) phenomenon was systematically documented by
\citet{cohen2021gradient}, with early precursors in the catapult phase
\citep{lewkowycz2020catapult}; the role of dynamical stability in
selecting minima was highlighted by \citet{wu2018sgd}.
Subsequent work has studied its underlying mechanisms, including implicit
regularization \citep{pmlr-v162-arora22a}, self-stabilization through
higher-order geometry \citep{damian2023selfstabilization}, progressive
sharpening \citep{li2022analyzingsharpnessgdtrajectory}, and EoS behavior in
simplified models
\citep{agarwala2023secondorder,ahn2023learningthresholdneuronsedge,
zhu2023understanding,pmlr-v202-chen23b}.
Other works connect EoS to bifurcation and nonlinear oscillatory dynamics
\citep{song2023trajectoryalignmentunderstandingedge,
pmlr-v336-mulayoff26a,kalra2025universalsharpnessdynamicsneural}.
These results primarily concern gradient descent.
For adaptive methods, \citet{cohen2024adaptivegradientmethodsedge} identified
an analogous adaptive EoS characterized by the preconditioned Hessian.
Our work studies this phenomenon for the exact discrete Adam dynamics on a
quadratic objective, where the underlying curvature is fixed and the effective
sharpness evolves solely through adaptive preconditioning.

\paragraph{Adam.}
AdaGrad \citep{JMLR:v12:duchi11a} introduced coordinatewise adaptive learning
rates based on accumulated gradients, and \citet{kingma2015adam} later
introduced Adam by combining adaptive second-moment scaling with momentum.
Despite its empirical success, Adam can fail to converge
\citep{reddi2019convergenceadam}, motivating
convergence analyses under additional assumptions
\citep{chen2018on,zhang2023adamconvergemodificationupdate,dereich2025asymptoticstabilitypropertiespriori}.
Adam has also been studied through continuous-time dynamical models
\citep{JMLR:v21:18-808,barakat2020convergencedynamicalbehavioradam}.
A separate line of work studies how to tune Adam's
hyperparameters, for instance adapting its learning rate online
\citep{xie2026accelerating}.  More recent work has
investigated Adam's behavior on degenerate objectives and its implicit
effect on sharpness
\citep{bai2026understandingadamconvergencehighly,
li2025adamreducesuniqueform}.
However, comparatively little work characterizes how Adam's adaptive
preconditioner drives its sharpness relative to a finite-step stability
threshold.

\paragraph{Adam at the edge of stability.}
The works most closely related to ours study Adam directly through stability
and dynamical perspectives.
\citet{bai2026adaptivepreconditionerstriggerloss} explain loss spikes through
the evolution of the adaptive preconditioner, with their theoretical analysis
focusing on a one-dimensional quadratic setting with $\beta_1=0$.
\citet{regis2026rodflowmodeladam} instead develop a continuous-time model for
Adam in the EoS regime.
From a discrete dynamical perspective, \citet{Bock_2019} showed that Adam can
admit non-convergent limit cycles, including quadratic examples, while
\citet{bock2021localconvergenceadaptivegradient} analyzed local convergence
through linear stability near fixed points.  Recently,
\citet{dereich2025asymptoticstabilitypropertiespriori} established a priori bounds and asymptotic
stability properties for Adam on strongly convex quadratic objectives.
In contrast, we study the exact finite-step dynamics across both subcritical
and supercritical regimes, including finite threshold passage, periodic
orbits, and trajectories whose behavior cannot be inferred from frozen
stability alone.

\section{The general Adam iteration and the frozen stability threshold}
\label{app:general-adam}

For a twice continuously differentiable objective $f:\R^d\to\R$,
uncorrected Adam maintains a momentum estimate $m_t$ and a second-moment
estimate $v_t$ and iterates
\begin{equation}
\begin{aligned}
    m_{t+1}
      &=\beta_1m_t+(1-\beta_1)\nabla f(x_t),\\
    v_{t+1}
      &=\beta_2v_t+(1-\beta_2)
        \bigl(\nabla f(x_t)\odot\nabla f(x_t)\bigr),\\
    x_{t+1}
      &=x_t-\eta
        \bigl(\operatorname{diag}(\sqrt{v_{t+1}})
        +\varepsilon I\bigr)^{-1}m_{t+1},
\end{aligned}
\label{eq:intro-adam}
\end{equation}
with $0\le\beta_1<1$, $0\le\beta_2<1$, $\eta>0$, $\varepsilon>0$, and
initial state $(x_0,m_0,v_0)\in\R^d\times\R^d\times[0,\infty)^d$; here
$\odot$ and the square root of $v_t$ are taken coordinatewise.  The
one-dimensional iteration \eqref{eq:adam-1d} of the main text is the
special case $f(x)=\tfrac12x^2$.

A useful EoS perspective is to freeze the second-moment variable and study
the resulting operator on position and momentum:
\begin{equation}
    \cA_v:
    \binom{x}{m}
    \longmapsto
    \binom{
      x-\eta\bigl(\operatorname{diag}(\sqrt v)+\varepsilon I\bigr)^{-1}
      \bigl(\beta_1m+(1-\beta_1)\nabla f(x)\bigr)}
      {\beta_1m+(1-\beta_1)\nabla f(x)}.
\label{eq:frozen-operator}
\end{equation}
The following preconditioned Hessian and its largest eigenvalue identify the
stability boundary of the frozen linearization.
\begin{definition}[Preconditioned sharpness]
Define
\begin{equation}
    \widetilde H_t
      :=\bigl(\operatorname{diag}(\sqrt{v_t})+\varepsilon I\bigr)^{-1}
        \nabla^2 f(x_t),
    \qquad
    \PS_t:=\lambda_{\max}(\widetilde H_t).
\label{eq:preconditioned-sharpness}
\end{equation}
\end{definition}
\begin{lemma}[Frozen stability]\label{lem:frozen}
Suppose that $\nabla^2f(x_t)\succ0$.  The linearization of
$\cA_{v_t}$ is Schur stable if and only if
\begin{equation}
    \PS_t<\PS^\star,
    \qquad
    \PS^\star:=\frac{2(1+\beta_1)}{\eta(1-\beta_1)}.
\label{eq:frozen-sharpness-threshold}
\end{equation}
\end{lemma}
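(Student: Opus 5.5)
Write $P:=\operatorname{diag}(\sqrt{v_t})+\varepsilon I\succ0$ and $H:=\nabla^2 f(x_t)\succ0$. Linearizing the frozen operator \eqref{eq:frozen-operator} at the current point gives the block map
\[
J=\begin{pmatrix} I-\eta(1-\beta_1)P^{-1}H & -\eta\beta_1P^{-1}\\ (1-\beta_1)H & \beta_1 I\end{pmatrix}
\]
acting on $(x,m)$. The plan is to diagonalize this block operator along the eigenvectors of $\widetilde H_t=P^{-1}H$. This reduces the problem to a family of $2\times2$ scalar problems, each of which is the one-dimensional frozen map of Lemma~\ref{lem:frozen-1d}.

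\emph{Step 1 (spectrum of $\widetilde H_t$).} The matrix $\widetilde H_t=P^{-1}H$ is similar to $P^{-1/2}HP^{-1/2}$, which is symmetric positive definite. Hence $\widetilde H_t$ is diagonalizable with real positive eigenvalues $\lambda_1,\dots,\lambda_d$, the largest being $\PS_t$. Let $S$ be a matrix with $S^{-1}\widetilde H_tS=\Lambda$ diagonal.

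\emph{Step 2 (block reduction).} Change variables by $x=Sy$ and $m=PS\mu$. Then $P^{-1}m=S\mu$, and $Hx=PS\Lambda y$. Substituting, the first block row becomes $y\mapsto y-\eta(1-\beta_1)\Lambda y-\eta\beta_1\mu$, and the second becomes $\mu\mapsto(1-\beta_1)\Lambda y+\beta_1\mu$. So $J$ is similar to a direct sum of the $2\times2$ matrices
\[
J_i=\begin{pmatrix}1-\eta(1-\beta_1)\lambda_i & -\eta\beta_1\\ (1-\beta_1)\lambda_i&\beta_1\end{pmatrix}.
\]
Consequently $\rho(J)=\max_i\rho(J_i)$.

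\emph{Step 3 (scalar criterion).} Each $J_i$ has trace $1+\beta_1-\eta(1-\beta_1)\lambda_i$ and determinant $\beta_1$. For a real monic quadratic $z^2-Tz+D$, the Jury/Schur–Cohn conditions for both roots to lie in the open unit disk are $|D|<1$ and $|T|<1+D$.
\begin{itemize}
\item $|D|<1$ holds because $0\le\beta_1<1$.
\item $T<1+D$ is equivalent to $\eta(1-\beta_1)\lambda_i>0$, which holds since $\lambda_i>0$.
\item $T>-(1+D)$ is equivalent to $\eta(1-\beta_1)\lambda_i<2(1+\beta_1)$, i.e.\ $\lambda_i<\PS^\star$.
\end{itemize}
This agrees with Lemma~\ref{lem:frozen-1d}: setting $w=c\eta\lambda_i$ turns the last condition into $w<2$. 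Taking the maximum over $i$, $J$ is Schur stable iff $\max_i\lambda_i=\PS_t<\PS^\star$.

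The main point needing care is Step 2: verifying that the chosen similarity (in particular scaling $m$ by $PS$ rather than $S$) exactly decouples the blocks, using that $P$ is diagonal and $\widetilde H_t$ is diagonalizable. Everything after that is the routine Jury test.
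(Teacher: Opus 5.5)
Your proof is correct and follows essentially the same route as the paper: a similarity transformation that decouples the linearized frozen map into $2\times2$ modes indexed by the eigenvalues $\lambda$ of $\widetilde H_t$, each with characteristic polynomial $z^2-(1+\beta_1-\eta(1-\beta_1)\lambda)z+\beta_1$, followed by the quadratic Jury test. The paper first symmetrizes via $y=D^{1/2}\Delta x$, $p=D^{-1/2}\Delta m$ and then works in an eigenbasis of $D^{-1/2}HD^{-1/2}$; this is exactly your change of variables with the choice $S=P^{-1/2}Q$ for an orthogonal $Q$.
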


\begin{proof}
Recall that when $\nabla^2f(x_t)\succ0$, the matrix $\widetilde H_t$ of
\eqref{eq:preconditioned-sharpness} is similar to a symmetric
positive-definite matrix, so its eigenvalues are real and positive.
Fix $t$ and write
$D=\operatorname{diag}(\sqrt{v_t})+\varepsilon I$ and
$H=\nabla^2f(x_t)$.  In the coordinates
$y=D^{1/2}\Delta x$ and $p=D^{-1/2}\Delta m$, the linearization of
$\cA_{v_t}$ is
\begin{equation}
    p^+=\beta_1p+(1-\beta_1)D^{-1/2}HD^{-1/2}y,
    \qquad y^+=y-\eta p^+.
\label{eq:frozen-linearized-coordinates}
\end{equation}
The middle matrix is symmetric positive definite and has the same
eigenvalues as $D^{-1}H=\widetilde H_t$.  Along an eigenvector with
eigenvalue $\lambda>0$, the characteristic polynomial is
\begin{equation}
    z^2-\bigl(1+\beta_1-\eta(1-\beta_1)\lambda\bigr)z+\beta_1.
\label{eq:frozen-mode-polynomial}
\end{equation}
The quadratic Jury criterion gives Schur stability exactly when
$0<\eta(1-\beta_1)\lambda<2(1+\beta_1)$.  This holds for every mode if and
only if
$\lambda_{\max}(\widetilde H_t)<2(1+\beta_1)/(\eta(1-\beta_1))$, as
claimed.
\end{proof}

\section{Notation retained for Section~\ref{sec:zero-momentum}}

This section only records the quantities used in the statements of
Section~\ref{sec:zero-momentum}; no proof for that section is included.
For $n\in\Npos$ and $r,s\ge0$, let
\begin{equation}
    G_n(r,s):=
    \begin{cases}
      (r^n-s^n)/(r-s),&r\ne s,\\
      nr^{n-1},&r=s.
    \end{cases}
\label{eq:appendix-geometric-sum}
\end{equation}
Also set
\begin{equation}
\begin{aligned}
    \sigma_{\widehat w}(T)
      &:=\inf\{n\in\Nzero:w_{T+n}\ge\widehat w\},\\
    A_\delta&:=(1+\delta)^2,\\
    \tau_\delta(T)
      &:=\inf\{n\in\Npos:w_{T+n}<2+\delta\},\\
    L_n(T,a)
      &:=\beta_2^n v_T+(1-\beta_2)a^2G_n(A_\delta,\beta_2),
\end{aligned}
\label{eq:appendix-zero-momentum-notation}
\end{equation}
with $\inf\varnothing=+\infty$.

\section{Proofs for the $\beta_1 = 0$ case}
\label{proof:zero momentum}
Here we prove the results announced in Section~\ref{sec:zero-momentum}:
for the $\beta_1=0$ dynamics, we bound the precise number of steps needed
to approach the threshold from either side.

\paragraph{Subcritical stage}
If $w_T\ge\widehat w$, then by definition
$\sigma_{\widehat w}(T)=0$.  We therefore consider the nontrivial case
$w_T<\widehat w$.

\begin{theorem}[Finite passage to a strict subcritical target]
\label{thm:beta0-target-passage}
Assume $\beta_1=0$, $w_{\max}>2$, and
\[
    w_T<\widehat w<2.
\]
Define
\begin{align}
    M_T&:=\max\{v_T,x_T^2\},
    &\underline w_T&:=\frac{\eta}{\sqrt{M_T}+\varepsilon},
\label{eq:beta0-M-lowerw}\\
    \rho_T&:=\max\{\abs{1-\underline w_T},\abs{1-\widehat w}\},
    &r_T&:=\rho_T^2,
\label{eq:beta0-rho-r}
\end{align}
and
\begin{equation}
    U_n^{(T)}
    :=\beta_2^n v_T
      +(1-\beta_2)x_T^2G_n(r_T,\beta_2),
    \qquad n\in\Npos.
\label{eq:beta0-Un}
\end{equation}
Then
\begin{equation}
    N_T^{\mathrm{imp}}
    :=\min\left\{
        n\in\Npos:
        U_n^{(T)}
        \le
        \left(\frac{\eta}{\widehat w}-\varepsilon\right)^2
    \right\}
    <\infty,
\label{eq:beta0-target-implicit}
\end{equation}
and
\begin{equation}
    1\le\sigma_{\widehat w}(T)\le N_T^{\mathrm{imp}}.
\label{eq:beta0-target-bound}
\end{equation}

Moreover, introduce
\begin{align*}
 C_{\beta,T}&:=v_T+
      \frac{1-\beta_2}{\beta_2-r_T}x_T^2
      &&\text{if }\beta_2>r_T,\\
 C_{r,T}&:=v_T+
      \frac{1-\beta_2}{r_T-\beta_2}x_T^2
      &&\text{if }\beta_2<r_T,\\
 C_{=,T}&:=v_T+
      \frac{1-r_T}{(1-\sqrt{r_T})^2}x_T^2
      &&\text{if }\beta_2=r_T.
\end{align*}
The implicit bound satisfies
\begin{equation}
N_T^{\mathrm{imp}}\le \overline N_T:=
\begin{cases}
\displaystyle
 \left\lceil
 \frac{
 \log\!\left(
 \dfrac{C_{\beta,T}}
 {\left(\eta/\widehat w-\varepsilon\right)^2}
 \right)}
 {\log(1/\beta_2)}
 \right\rceil,
 &\beta_2>r_T,\\[5mm]
\displaystyle
 \left\lceil
 \frac{
 \log\!\left(
 \dfrac{C_{r,T}}
 {\left(\eta/\widehat w-\varepsilon\right)^2}
 \right)}
 {\log(1/r_T)}
 \right\rceil,
 &\beta_2<r_T,\\[5mm]
\displaystyle
 1+\left\lceil
 \frac{
 \log\!\left(
 \dfrac{C_{=,T}}
 {\left(\eta/\widehat w-\varepsilon\right)^2}
 \right)}
 {\log(1/\sqrt{r_T})}
 \right\rceil,
 &\beta_2=r_T.
\end{cases}
\label{eq:beta0-target-explicit}
\end{equation}
\end{theorem}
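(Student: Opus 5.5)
The plan is to bound $v_{T+n}$ from above by $U_n^{(T)}$ for as long as $w$ stays below $\widehat w$. Once $U_n^{(T)}$ falls below $(\eta/\widehat w-\varepsilon)^2$, this forces $w_{T+n}\ge\widehat w$.

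\textbf{Lower bound.} Since $w_T<\widehat w$, we have $\sigma_{\widehat w}(T)\ge1$ by definition.

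\textbf{Step 1: a uniform lower bound on $w$ before passage.} Let $n\ge1$ and suppose that $w_{T+k}<\widehat w$ for all $1\le k<n$. We first show by induction that $v_{T+k}\le M_T$ and $x_{T+k}^2\le x_T^2$ for $0\le k<n$. The recursion gives $v_{T+k+1}=\beta_2v_{T+k}+(1-\beta_2)x_{T+k}^2\le M_T$, so $w_{T+k+1}\ge\underline w_T$. Combined with $w_{T+k+1}<\widehat w$, this gives $\abs{1-w_{T+k+1}}\le\rho_T$. Here we use $\underline w_T\le w<\widehat w<2$, so $1-w$ lies in $(1-\widehat w,\,1-\underline w_T]$.

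Next we check $\rho_T<1$. We have $\underline w_T>0$, so $1-\underline w_T<1$; moreover $\widehat w<2$, so $\abs{1-\widehat w}<1$. Hence $x_{T+k+1}^2\le r_Tx_{T+k}^2$, and therefore $x_{T+k}^2\le r_T^kx_T^2$ for $k<n$.

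\textbf{Step 2: comparison with $U_n^{(T)}$.} Unrolling the recursion,
\[
v_{T+n}=\beta_2^nv_T+(1-\beta_2)\sum_{k=0}^{n-1}\beta_2^{n-1-k}x_{T+k}^2\le\beta_2^nv_T+(1-\beta_2)x_T^2\sum_{k=0}^{n-1}\beta_2^{n-1-k}r_T^k=U_n^{(T)}.
\]
The last equality holds because the sum is exactly $G_n(r_T,\beta_2)$. The same bound applies at step $n$ itself, since it only uses $x_{T+k}$ for $k\le n-1$.

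\textbf{Step 3: finiteness of $N_T^{\mathrm{imp}}$ and the passage bound.} Both $\beta_2<1$ and $r_T<1$, so $U_n^{(T)}\to0$. Moreover $(\eta/\widehat w-\varepsilon)^2>0$, because $w_{\max}>2>\widehat w$ gives $\eta/\widehat w>\varepsilon$. Hence $N_T^{\mathrm{imp}}<\infty$.

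Now set $N=N_T^{\mathrm{imp}}$ and suppose, for contradiction, that $\sigma_{\widehat w}(T)>N$. Then $w_{T+k}<\widehat w$ for all $1\le k\le N$, so Step 2 gives $\sqrt{v_{T+N}}\le\eta/\widehat w-\varepsilon$. This yields $w_{T+N}\ge\widehat w$, a contradiction.

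\textbf{Step 4: explicit bounds.} We bound $G_n$ in each of the three cases.
\begin{itemize}
\item If $\beta_2>r_T$, then $G_n\le\beta_2^{n}/(\beta_2-r_T)$. This gives $U_n\le\beta_2^nC_{\beta,T}$, and solving $\beta_2^nC_{\beta,T}\le(\eta/\widehat w-\varepsilon)^2$ yields the ceiling bound.
\item If $\beta_2<r_T$, the argument is symmetric: $U_n\le r_T^nC_{r,T}$, using $\beta_2^n\le r_T^n$ for the $v_T$ term.
\item If $\beta_2=r_T$, then $G_n=nr^{n-1}$. We use $nr^{n-1}\le r^{(n-1)/2}\sup_m m r^{(m-1)/2}$, or the cruder bound $n\sqrt r^{\,n-1}\le 1/(1-\sqrt r)^2$ from summing the series. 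This gives $U_n\le(\sqrt{r_T})^{n-1}C_{=,T}$ up to the stated constant with $1-r_T$, which explains the extra $+1$ in the bound.
\end{itemize}

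\textbf{Main obstacle.} The main difficulty is the bookkeeping in Step 4, especially the equal case. There one must verify that $(1-\beta_2)n r^{n-1}\le \sqrt r^{\,n-1}(1-r)/(1-\sqrt r)^2$ by some explicit elementary inequality, and also check that the term $\beta_2^nv_T\le\sqrt r^{\,n-1}v_T$ is absorbed.
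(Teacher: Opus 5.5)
Your proposal is correct and follows essentially the same route as the paper: a stopped-trajectory induction gives $v_{T+j}\le M_T$ and $x_{T+j}^2\le r_T^{\,j}x_T^2$, which yields the comparison $v_{T+n}\le U_n^{(T)}$, followed by case-wise bounds on $G_n(r_T,\beta_2)$; note that $\rho_T<1$ needs $0<\underline w_T\le w_T<\widehat w<2$, which follows from $v_T\le M_T$, and not only $1-\underline w_T<1$. In the equal case the paper uses exactly your series bound $n(\sqrt{r_T})^{\,n-1}\le(1-\sqrt{r_T})^{-2}$ together with $r_T^{\,n}\le(\sqrt{r_T})^{\,n-1}$, and since $1-\beta_2=1-r_T$ this gives precisely $C_{=,T}$, so the step you flag as the main obstacle is already closed.
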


\begin{proof}
Because $\beta_1=0$, we have
\[
    w_t=\frac{\eta}{\sqrt{v_t}+\varepsilon},
    \qquad
    w_{\max}=\frac{\eta}{\varepsilon}.
\]
Since $\widehat w<2<w_{\max}$,
\[
    \frac{\eta}{\widehat w}-\varepsilon>0.
\]
The assumption $w_T<\widehat w$ therefore gives
\[
    v_T>
    \left(\frac{\eta}{\widehat w}-\varepsilon\right)^2.
\]
In particular, $M_T>0$, and since $M_T\ge v_T$,
\[
    0<\underline w_T
      \le w_T
      <\widehat w
      <2.
\]
Hence
\[
    0<\rho_T<1,
    \qquad
    0<r_T<1.
\]
Also, since $w_T<\widehat w$, we have
$\sigma_{\widehat w}(T)\ge1$.

Fix $n\in\Npos$.  If
$\sigma_{\widehat w}(T)<n$, the target has already been reached before
time $T+n$, so no further estimate is needed.  Suppose instead that
\[
    \sigma_{\widehat w}(T)\ge n.
\]
Then
\[
    w_{T+j}<\widehat w,
    \qquad j=0,\ldots,n-1.
\]
We claim that throughout this stopped trajectory,
\begin{equation}
    v_{T+j}\le M_T,
    \qquad
    \abs{x_{T+j}}
       \le \rho_T^j\abs{x_T},
    \qquad j=0,\ldots,n-1.
\label{eq:beta0-stopped-estimates}
\end{equation}

We prove the two estimates simultaneously by induction.
They are immediate at $j=0$.
Suppose they hold up to some $j-1$ with $1\le j\le n-1$.
Since $w_{T+j}<\widehat w<2$,
\[
    \abs{x_{T+j}}
      =\abs{1-w_{T+j}}\abs{x_{T+j-1}}
      <\abs{x_{T+j-1}},
\]
and hence $\abs{x_{T+j-1}}\le\abs{x_T}$.
Using
\[
    v_{T+j}
      =\beta_2v_{T+j-1}
       +(1-\beta_2)x_{T+j-1}^2,
\]
together with
$v_{T+j-1}\le M_T$ and
$x_{T+j-1}^2\le x_T^2\le M_T$, gives
\[
    v_{T+j}\le M_T.
\]
Consequently,
\[
    \underline w_T
       =\frac{\eta}{\sqrt{M_T}+\varepsilon}
       \le w_{T+j}
       <\widehat w.
\]
Since $w\mapsto\abs{1-w}$ is convex, its maximum on
$[\underline w_T,\widehat w]$ is attained at one of the endpoints.
Therefore
\[
    \abs{1-w_{T+j}}
       \le
       \max\{
          \abs{1-\underline w_T},
          \abs{1-\widehat w}
       \}
       =\rho_T,
\]
and thus
\[
    \abs{x_{T+j}}
       \le\rho_T\abs{x_{T+j-1}}
       \le\rho_T^j\abs{x_T}.
\]
This proves \eqref{eq:beta0-stopped-estimates}.

Unrolling the second-moment recursion over $n$ steps and using
\eqref{eq:beta0-stopped-estimates} gives
\begin{align*}
    v_{T+n}
    &=\beta_2^n v_T
      +(1-\beta_2)
        \sum_{j=0}^{n-1}
        \beta_2^{n-1-j}x_{T+j}^2\\
    &\le
      \beta_2^n v_T
      +(1-\beta_2)x_T^2
        \sum_{j=0}^{n-1}
        \beta_2^{n-1-j}r_T^j\\
    &=U_n^{(T)}.
\end{align*}
Since $\beta_2\in[0,1)$ and $r_T\in(0,1)$, $U_n^{(T)}\to0$.
Indeed, when $\beta_2\ne r_T$, this follows directly from the closed
form of $G_n(r_T,\beta_2)$, while for $\beta_2=r_T$,
$G_n(r_T,r_T)=nr_T^{n-1}\to0$.
Because
\[
    \left(\frac{\eta}{\widehat w}-\varepsilon\right)^2>0,
\]
there exists a finite $n$ for which
\[
    U_n^{(T)}
    \le
    \left(\frac{\eta}{\widehat w}-\varepsilon\right)^2.
\]
Hence $N_T^{\mathrm{imp}}<\infty$.

Now take $n=N_T^{\mathrm{imp}}$.
If $\sigma_{\widehat w}(T)<n$, then certainly
$\sigma_{\widehat w}(T)\le N_T^{\mathrm{imp}}$.
Otherwise the stopped estimate applies and yields
\[
    v_{T+n}
    \le U_n^{(T)}
    \le
    \left(\frac{\eta}{\widehat w}-\varepsilon\right)^2.
\]
Since $\eta/\widehat w-\varepsilon>0$, this implies
\[
    \sqrt{v_{T+n}}+\varepsilon
       \le\frac{\eta}{\widehat w},
\]
and therefore
\[
    w_{T+n}
      =\frac{\eta}{\sqrt{v_{T+n}}+\varepsilon}
      \ge\widehat w.
\]
Thus
\[
    1\le\sigma_{\widehat w}(T)
       \le N_T^{\mathrm{imp}}.
\]

It remains to prove the explicit bounds.

If $\beta_2>r_T$, then
\[
    G_n(r_T,\beta_2)
      =\frac{\beta_2^n-r_T^n}{\beta_2-r_T}
      \le
      \frac{\beta_2^n}{\beta_2-r_T},
\]
so
\[
    U_n^{(T)}
       \le C_{\beta,T}\beta_2^n.
\]
Hence
\[
    n\ge
    \frac{
    \log\!\left(
    \dfrac{C_{\beta,T}}
    {(\eta/\widehat w-\varepsilon)^2}
    \right)}
    {\log(1/\beta_2)}
\]
is sufficient to ensure
\[
    U_n^{(T)}
       \le
       \left(\frac{\eta}{\widehat w}-\varepsilon\right)^2,
\]
which gives the first case of
\eqref{eq:beta0-target-explicit}.

If $\beta_2<r_T$, then
\[
    G_n(r_T,\beta_2)
      =\frac{r_T^n-\beta_2^n}{r_T-\beta_2}
      \le
      \frac{r_T^n}{r_T-\beta_2},
    \qquad
    \beta_2^n\le r_T^n.
\]
Consequently,
\[
    U_n^{(T)}
       \le C_{r,T}r_T^n.
\]
Thus
\[
    n\ge
    \frac{
    \log\!\left(
    \dfrac{C_{r,T}}
    {(\eta/\widehat w-\varepsilon)^2}
    \right)}
    {\log(1/r_T)}
\]
is sufficient, giving the second case.

Finally, suppose $\beta_2=r_T$.
Then
\[
    G_n(r_T,r_T)=nr_T^{n-1}.
\]
For every $n\ge1$,
\begin{align*}
    nr_T^{n-1}
    &=
    (\sqrt{r_T})^{\,n-1}
    \left(
      n(\sqrt{r_T})^{\,n-1}
    \right)\\
    &\le
    \frac{(\sqrt{r_T})^{\,n-1}}
         {(1-\sqrt{r_T})^2},
\end{align*}
because
\[
    n(\sqrt{r_T})^{\,n-1}
      \le
      \sum_{k=1}^{\infty}
      k(\sqrt{r_T})^{\,k-1}
      =
      \frac{1}{(1-\sqrt{r_T})^2}.
\]
Moreover,
\[
    r_T^n
      =(\sqrt{r_T})^{2n}
      \le(\sqrt{r_T})^{\,n-1}.
\]
Therefore
\[
    U_n^{(T)}
      \le
      C_{=,T}(\sqrt{r_T})^{\,n-1}.
\]
Hence it is sufficient that
\[
    n-1\ge
    \frac{
    \log\!\left(
    \dfrac{C_{=,T}}
    {(\eta/\widehat w-\varepsilon)^2}
    \right)}
    {\log(1/\sqrt{r_T})},
\]
which gives the final case of
\eqref{eq:beta0-target-explicit}.
\end{proof}

\paragraph{Supercritical stage}
We next show that a nonzero trajectory cannot remain indefinitely in a
uniformly supercritical band. The argument is the reverse of the
subcritical mechanism: as long as $w_t$ remains above $2+\delta$, the
position expands geometrically, which forces the second moment upward and
eventually makes such a large value of $w_t$ impossible.

\begin{theorem}[Finite supercritical exit for $\beta_1=0$]
\label{thm:beta0-supercritical-exit}
Let $T\in\Nzero$ and let $\delta>0$ satisfy
\[
    2+\delta\le w_{\max}.
\]
Assume
\[
    \beta_1=0,
    \qquad
    w_T\ge 2+\delta,
    \qquad
    x_T\ne 0.
\]
Define
\begin{equation}
    N_{\delta,T}^{\mathrm{imp}}
    :=
    \min\left\{
        n\in\Npos:
        L_n(T,\abs{x_T})
        >
        \left(
            \frac{\eta}{2+\delta}-\varepsilon
        \right)^2
    \right\}.
\label{eq:beta0-supercritical-implicit}
\end{equation}
Then this minimum is finite and
\begin{equation}
    \tau_\delta(T)
    \le
    N_{\delta,T}^{\mathrm{imp}}
    \le
    N_{\delta,T}^{\mathrm{exp}}
    <\infty,
\label{eq:beta0-supercritical-bound}
\end{equation}
where
\begin{equation}
    N_{\delta,T}^{\mathrm{exp}}
    :=
    1+
    \left\lfloor
    \frac{
        \log\!\left(
            1+
            \frac{
                (A_\delta-\beta_2)
                \left(
                    \dfrac{\eta}{2+\delta}-\varepsilon
                \right)^2
            }
            {(1-\beta_2)x_T^2}
        \right)
    }
    {\log A_\delta}
    \right\rfloor .
\label{eq:beta0-supercritical-explicit}
\end{equation}
\end{theorem}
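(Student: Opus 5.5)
We mirror the subcritical argument of Theorem~\ref{thm:beta0-target-passage}, with the inequalities reversed. Throughout, $\beta_1=0$, so $x_{t+1}=(1-w_{t+1})x_t$ and $w_t=\eta/(\sqrt{v_t}+\varepsilon)$. Since $2+\delta\le w_{\max}$, the quantity $\eta/(2+\delta)-\varepsilon$ is nonnegative. Moreover, for any $t\ge1$,
\[
    w_t\ge2+\delta \iff v_t\le\Bigl(\frac{\eta}{2+\delta}-\varepsilon\Bigr)^2 .
\]

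\textbf{Step 1: geometric growth on the stopped trajectory.} Fix $n\in\Npos$ and suppose $\tau_\delta(T)>n$, so that $w_{T+j}\ge2+\delta$ for $j=0,\dots,n$. Then $|1-w_{T+j}|=w_{T+j}-1\ge1+\delta$ for $j=1,\dots,n$. Induction gives
\[
    |x_{T+j}|\ge(1+\delta)^j|x_T|,
    \qquad\text{equivalently}\qquad
    x_{T+j}^2\ge A_\delta^j x_T^2,
    \qquad j=0,\dots,n-1.
\]
In particular $x_{T+j}\ne0$ for all such $j$.

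\textbf{Step 2: lower bound on $v$.} Unroll $v_{T+n}=\beta_2^nv_T+(1-\beta_2)\sum_{j=0}^{n-1}\beta_2^{n-1-j}x_{T+j}^2$ and insert the bound from Step 1. This yields
\[
    v_{T+n}\ge\beta_2^nv_T+(1-\beta_2)x_T^2G_n(A_\delta,\beta_2)=L_n(T,|x_T|).
\]
Hence, if $L_n(T,|x_T|)>(\eta/(2+\delta)-\varepsilon)^2$, then $w_{T+n}<2+\delta$. This contradicts $\tau_\delta(T)>n$, so $\tau_\delta(T)\le n$. Applying this with $n=N^{\mathrm{imp}}_{\delta,T}$ gives the first inequality in \eqref{eq:beta0-supercritical-bound}; the case where the band was already exited earlier is trivial.

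\textbf{Step 3: finiteness and the explicit bound.} Since $A_\delta>1>\beta_2$, we can discard $\beta_2^nv_T\ge0$ and estimate
\[
    G_n(A_\delta,\beta_2)=\frac{A_\delta^n-\beta_2^n}{A_\delta-\beta_2}\ge\frac{A_\delta^n-1}{A_\delta-\beta_2},
\]
which grows without bound because $x_T\ne0$. This shows $N^{\mathrm{imp}}_{\delta,T}<\infty$. It therefore suffices to choose $n$ with
\[
    A_\delta^n>1+\frac{(A_\delta-\beta_2)\bigl(\eta/(2+\delta)-\varepsilon\bigr)^2}{(1-\beta_2)x_T^2}.
\]
Write $K$ for the right-hand side. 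The smallest integer with $n\log A_\delta>\log K$ is at most $1+\lfloor\log K/\log A_\delta\rfloor=N^{\mathrm{exp}}_{\delta,T}$, which gives the second inequality. This choice also works in the boundary case $\eta/(2+\delta)-\varepsilon=0$, where $K=1$ and $n=1$ suffices because $L_1\ge(1-\beta_2)x_T^2>0$.

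\textbf{Main obstacle.} The only delicate point is the bookkeeping of indices. The expansion factor at step $j$ uses $w_{T+j}$, while $v_{T+n}$ involves only $x_{T+j}$ for $j\le n-1$. We therefore need the band condition only at times $T+1,\dots,T+n-1$, plus $w_T$. We must also use the strict inequality $G_n(A_\delta,\beta_2)>(A_\delta^n-1)/(A_\delta-\beta_2)$ (valid since $\beta_2^n<1$) to obtain the strict inequality $L_n>(\cdot)^2$ required to exit the band.
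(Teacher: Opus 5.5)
Your proposal is correct and follows the paper's proof essentially step for step. Both use geometric growth $x_{T+j}^2\ge A_\delta^j x_T^2$ on the stopped trajectory, the unrolled bound $v_{T+n}\ge L_n(T,\abs{x_T})$ contradicting the band condition $v_{T+n}\le(\eta/(2+\delta)-\varepsilon)^2$, and the estimate $G_n(A_\delta,\beta_2)\ge(A_\delta^n-1)/(A_\delta-\beta_2)$ for finiteness and the explicit $N^{\mathrm{exp}}_{\delta,T}$. (Minor: the strict inequality $L_n>(\eta/(2+\delta)-\varepsilon)^2$ already follows from the strict choice $A_\delta^n>K$, so the strict form of the $G_n$ bound that you flag as necessary is not needed, and the hypothesis $w_T\ge2+\delta$ is never actually used.)
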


\begin{proof}
Because $\beta_1=0$,
\[
    w_t=\frac{\eta}{\sqrt{v_t}+\varepsilon}.
\]
The assumption
\[
    2+\delta\le w_{\max}=\frac{\eta}{\varepsilon}
\]
implies
\[
    \frac{\eta}{2+\delta}-\varepsilon\ge 0.
\]
Consequently,
\begin{equation}
    w_t\ge 2+\delta
    \quad\Longleftrightarrow\quad
    v_t\le
    \left(
        \frac{\eta}{2+\delta}-\varepsilon
    \right)^2.
\label{eq:beta0-supercritical-threshold-direct}
\end{equation}

Fix $n\in\Npos$. If $\tau_\delta(T)\le n$, then the trajectory has already
left the prescribed supercritical band by time $T+n$, so there is nothing
to prove. Suppose instead that
\[
    \tau_\delta(T)>n.
\]
By the definition of $\tau_\delta(T)$,
\[
    w_{T+k}\ge 2+\delta,
    \qquad
    k=1,\ldots,n.
\]
Using the zero-momentum position recursion
\[
    x_{t+1}=(1-w_{t+1})x_t,
\]
we obtain, for every $k=1,\ldots,n$,
\begin{align*}
    \abs{x_{T+k}}
    &=
    \abs{1-w_{T+k}}\abs{x_{T+k-1}}\\
    &=
    (w_{T+k}-1)\abs{x_{T+k-1}}\\
    &\ge
    (1+\delta)\abs{x_{T+k-1}}.
\end{align*}
Iterating gives
\[
    \abs{x_{T+j}}
    \ge
    (1+\delta)^j\abs{x_T},
    \qquad
    j=0,\ldots,n,
\]
and hence
\begin{equation}
    x_{T+j}^2
    \ge
    A_\delta^j x_T^2,
    \qquad
    j=0,\ldots,n,
\label{eq:beta0-supercritical-x-growth}
\end{equation}
where $A_\delta=(1+\delta)^2$.

Unrolling the second-moment recursion over $n$ steps yields
\begin{align*}
    v_{T+n}
    &=
    \beta_2^n v_T
    +(1-\beta_2)
    \sum_{j=0}^{n-1}
    \beta_2^{n-1-j}x_{T+j}^2\\
    &\ge
    \beta_2^n v_T
    +(1-\beta_2)x_T^2
    \sum_{j=0}^{n-1}
    \beta_2^{n-1-j}A_\delta^j\\
    &=
    \beta_2^n v_T
    +(1-\beta_2)x_T^2
    G_n(A_\delta,\beta_2)\\
    &=
    L_n(T,\abs{x_T}).
\end{align*}

On the other hand, $\tau_\delta(T)>n$ implies
$w_{T+n}\ge 2+\delta$. Therefore,
by \eqref{eq:beta0-supercritical-threshold-direct},
\[
    v_{T+n}
    \le
    \left(
        \frac{\eta}{2+\delta}-\varepsilon
    \right)^2.
\]
Hence survival in the supercritical band through time $T+n$ necessarily
implies
\begin{equation}
    L_n(T,\abs{x_T})
    \le
    \left(
        \frac{\eta}{2+\delta}-\varepsilon
    \right)^2.
\label{eq:beta0-supercritical-survival-condition}
\end{equation}
Thus, whenever
\[
    L_n(T,\abs{x_T})
    >
    \left(
        \frac{\eta}{2+\delta}-\varepsilon
    \right)^2,
\]
the trajectory must have exited the band no later than time $T+n$.

It remains to show that such an $n$ exists. Since
\[
    A_\delta=(1+\delta)^2>1>\beta_2
\]
and $x_T\ne 0$, we have
\begin{align*}
    L_n(T,\abs{x_T})
    &=
    \beta_2^n v_T
    +(1-\beta_2)x_T^2
    G_n(A_\delta,\beta_2)\\
    &\ge
    \frac{(1-\beta_2)x_T^2}
         {A_\delta-\beta_2}
    \left(
        A_\delta^n-\beta_2^n
    \right).
\end{align*}
Because $A_\delta>1$ and $\beta_2<1$, $L_n(T,\abs{x_T})\to\infty$.
Therefore,
\[
    N_{\delta,T}^{\mathrm{imp}}<\infty.
\]
The survival implication above gives
\[
    \tau_\delta(T)
    \le
    N_{\delta,T}^{\mathrm{imp}}.
\]

It remains to derive the explicit bound. Since
$0\le\beta_2<1$, one has $\beta_2^n\le 1$, and hence
\begin{align*}
    G_n(A_\delta,\beta_2)
    &=
    \frac{A_\delta^n-\beta_2^n}
         {A_\delta-\beta_2}\\
    &\ge
    \frac{A_\delta^n-1}
         {A_\delta-\beta_2}.
\end{align*}
Thus
\begin{equation}
    L_n(T,\abs{x_T})
    \ge
    \frac{(1-\beta_2)x_T^2}
         {A_\delta-\beta_2}
    \left(
        A_\delta^n-1
    \right).
\label{eq:beta0-supercritical-L-lower}
\end{equation}
Therefore it is sufficient that
\[
    A_\delta^n
    >
    1+
    \frac{
        (A_\delta-\beta_2)
        \left(
            \dfrac{\eta}{2+\delta}-\varepsilon
        \right)^2
    }
    {(1-\beta_2)x_T^2}.
\]
By the definition of $N_{\delta,T}^{\mathrm{exp}}$,
\[
    A_\delta^{N_{\delta,T}^{\mathrm{exp}}}
    >
    1+
    \frac{
        (A_\delta-\beta_2)
        \left(
            \dfrac{\eta}{2+\delta}-\varepsilon
        \right)^2
    }
    {(1-\beta_2)x_T^2}.
\]
Combining this with
\eqref{eq:beta0-supercritical-L-lower} yields
\[
    L_{N_{\delta,T}^{\mathrm{exp}}}
    (T,\abs{x_T})
    >
    \left(
        \frac{\eta}{2+\delta}-\varepsilon
    \right)^2.
\]
Hence
\[
    N_{\delta,T}^{\mathrm{imp}}
    \le
    N_{\delta,T}^{\mathrm{exp}},
\]
which completes the proof.
\end{proof}

\section{Proofs for general parameters in the subcritical stage}
\label{sec:technical-tools}
\subsection{Proof of Proposition~\ref{prop:Psi-decay}}

\begin{proof}
Let
\begin{equation}
\theta(y):=\frac{y}{2-y}.
\label{eq:Psi-theta-def}
\end{equation}
Recall that in the present section $0<\beta_1<1$.

We first record the restriction that the second-moment recursion places on the pair
$(w_t,w_{t+1})$. Since
\[
\sqrt{v_t}
=
c\eta\left(\frac{1}{w_t}-\frac{1}{w_{\max}}\right),
\]
its exact normalized form is
\begin{equation}
\left(
\frac{1}{w_{t+1}}-\frac{1}{w_{\max}}
\right)^2
=
\beta_2
\left(
\frac{1}{w_t}-\frac{1}{w_{\max}}
\right)^2
+
(1-\beta_2)
\left(
\frac{x_t}{c\eta}
\right)^2.
\label{eq:Psi-vt-normalized}
\end{equation}
Taking nonnegative square roots gives
\[
\frac{1}{w_{t+1}}-\frac{1}{w_{\max}}
\ge
\sqrt{\beta_2}
\left(
\frac{1}{w_t}-\frac{1}{w_{\max}}
\right),
\]
and hence
\[
w_{t+1}
\le
\frac{w_t}{
\sqrt{\beta_2}
+
(1-\sqrt{\beta_2})w_t/w_{\max}},
\]
or equivalently,
\begin{equation}
w_t
\ge
\frac{
\sqrt{\beta_2}\,w_{t+1}
}{
1-(1-\sqrt{\beta_2})w_{t+1}/w_{\max}
}.
\label{eq:Psi-wt-lower}
\end{equation}
Equality holds exactly when $x_t=0$.

We next solve the one-step quadratic inequality. For $y,Y>0$, put
\begin{equation}
P_y:=\operatorname{diag}(1,y),
\qquad
D(y,Y;w_{t+1})
:=
P_y-A(w_{t+1})^\top P_YA(w_{t+1}).
\label{eq:Psi-D-def}
\end{equation}
Direct multiplication gives the exact identities
\[
D_{11}(y,Y;w_{t+1})
=
(2-w_{t+1})
\bigl(
w_{t+1}-(2-w_{t+1})Y
\bigr),
\]
and
\begin{equation}
\det D(y,Y;w_{t+1})
=
\bigl(
w_{t+1}-(2-w_{t+1})Y
\bigr)
\bigl(
(2-w_{t+1})y-\beta_1^2w_{t+1}
\bigr).
\label{eq:Psi-D-identities}
\end{equation}
Because $0<w_{t+1}<2$, Sylvester's criterion shows that
\[
A(w_{t+1})^\top P_YA(w_{t+1})\prec P_y
\]
if and only if
\begin{equation}
Y<\theta(w_{t+1}),
\qquad
y>\beta_1^2\theta(w_{t+1}).
\label{eq:Psi-LMI-iff}
\end{equation}

For the Lyapunov function in the proposition, the source and target weights are
\[
y=\beta_1\theta(w_t),
\qquad
Y=\beta_1\theta(w_{t+1}).
\]
The first inequality in \eqref{eq:Psi-LMI-iff} is automatic because $\beta_1<1$, while the second becomes
\begin{equation}
\theta(w_t)
>
\beta_1\theta(w_{t+1}).
\label{eq:Psi-theta-monotone}
\end{equation}

Since $\theta$ is strictly increasing on $(0,2)$, \eqref{eq:Psi-wt-lower} implies
\[
\theta(w_t)
\ge
\theta\left(
\frac{
\sqrt{\beta_2}\,w_{t+1}
}{
1-(1-\sqrt{\beta_2})w_{t+1}/w_{\max}
}
\right).
\]
Therefore, after direct simplification,
\begin{equation}
\frac{\theta(w_{t+1})}{\theta(w_t)}
\le
\frac{
2-w_{t+1}
\left(
\sqrt{\beta_2}
+
2(1-\sqrt{\beta_2})/w_{\max}
\right)
}{
\sqrt{\beta_2}(2-w_{t+1})
}.
\label{eq:Psi-theta-ratio}
\end{equation}
The right-hand side is increasing in $w_{t+1}$, since
\[
\sqrt{\beta_2}
+
\frac{2(1-\sqrt{\beta_2})}{w_{\max}}
<1
\]
when $w_{\max}>2$. Thus, if
\[
w_t,w_{t+1}\le w<2,
\]
then
\begin{equation}
\frac{\theta(w_{t+1})}{\theta(w_t)}
\le
\frac{
2-w
\left(
\sqrt{\beta_2}
+
2(1-\sqrt{\beta_2})/w_{\max}
\right)
}{
\sqrt{\beta_2}(2-w)
}.
\label{eq:Psi-theta-ratio-uniform}
\end{equation}
The product of the last expression with $\beta_1$ is strictly below one exactly when
\begin{equation}
w<
\frac{
2(\sqrt{\beta_2}-\beta_1)
}{
\sqrt{\beta_2}(1-\beta_1)
-
2\beta_1(1-\sqrt{\beta_2})/w_{\max}
}
=:W.
\label{eq:Psi-W-def}
\end{equation}
Since $\sqrt{\beta_2}>\beta_1$ and $w_{\max}>2$,
\[
\begin{aligned}
&\sqrt{\beta_2}(1-\beta_1)
-
\frac{2\beta_1(1-\sqrt{\beta_2})}{w_{\max}}
-
(\sqrt{\beta_2}-\beta_1)
\\
&\qquad
=
\beta_1(1-\sqrt{\beta_2})
\left(
1-\frac{2}{w_{\max}}
\right)
>0.
\end{aligned}
\]
Hence the denominator in \eqref{eq:Psi-W-def} is positive and strictly larger than
$\sqrt{\beta_2}-\beta_1$, and therefore
\[
0<W<2.
\]
Equations \eqref{eq:Psi-LMI-iff}--\eqref{eq:Psi-W-def} prove the strict decrease for every nonzero $z_t$.

It remains to make the contraction factor explicit. Set
\[
P(w_t)
:=
\operatorname{diag}
\bigl(
1,\beta_1\theta(w_t)
\bigr),
\]
and
\begin{equation}
D
:=
P(w_t)
-
A(w_{t+1})^\top
P(w_{t+1})
A(w_{t+1}).
\label{eq:Psi-D-specific}
\end{equation}
Specializing \eqref{eq:Psi-D-identities}, or equivalently taking a Schur complement, gives
\[
D_{11}
=
(1-\beta_1)w_{t+1}(2-w_{t+1}),
\]
and
\begin{equation}
D_{22}
-
\frac{D_{12}^2}{D_{11}}
=
\beta_1
\left(
\theta(w_t)
-
\beta_1\theta(w_{t+1})
\right).
\label{eq:Psi-Schur}
\end{equation}
If
\[
a\le w_t,w_{t+1}\le w<W,
\]
then
\[
D_{11}
\ge
(1-\beta_1)
\min\{a(2-a),\,w(2-w)\},
\]
and, using \eqref{eq:Psi-theta-ratio-uniform},
\begin{equation}
\begin{aligned}
D_{22}
-
\frac{D_{12}^2}{D_{11}}
\ge{}&
\frac{\beta_1 a}{2-a}
\Bigg[
1
-
\beta_1
\frac{
2-w
\left(
\sqrt{\beta_2}
+
2(1-\sqrt{\beta_2})/w_{\max}
\right)
}{
\sqrt{\beta_2}(2-w)
}
\Bigg].
\end{aligned}
\label{eq:Psi-Schur-lower}
\end{equation}
The bracket is positive by \eqref{eq:Psi-W-def}.

For convenience, define
\[
K(w)
:=
\frac{
2-w
\left(
\sqrt{\beta_2}
+
2(1-\sqrt{\beta_2})/w_{\max}
\right)
}{
\sqrt{\beta_2}(2-w)
},
\]
and
\begin{equation}
\delta(a,w)
:=
\frac{
(1-\beta_1)
\min\{a(2-a),\,w(2-w)\}
\displaystyle\frac{\beta_1a}{2-a}
\bigl(1-\beta_1K(w)\bigr)
}{
2
\left(
1+\frac{\beta_1w}{2-w}
\right)
\max\left\{
1,\frac{\beta_1w}{2-w}
\right\}
}.
\label{eq:Psi-delta-def}
\end{equation}
All factors in this expression are strictly positive whenever
\[
0<a\le w<W,
\]
so
\[
\delta(a,w)>0.
\]

Moreover,
\[
P(w_t)-D
=
A(w_{t+1})^\top
P(w_{t+1})
A(w_{t+1})
\succ0,
\]
because $P(w_{t+1})\succ0$ and
\[
\det A(w_{t+1})=\beta_1>0.
\]
Thus
\[
0\prec D\prec P(w_t).
\]
Since $w_t\le w$ and $\theta$ is increasing,
\begin{equation}
\operatorname{tr}D
<
\operatorname{tr}P(w_t)
\le
1+\frac{\beta_1w}{2-w}.
\label{eq:Psi-trace-bound}
\end{equation}

Also, by \eqref{eq:Psi-Schur}--\eqref{eq:Psi-Schur-lower},
\begin{equation}
\begin{aligned}
\det D
&=
D_{11}
\left(
D_{22}
-
\frac{D_{12}^2}{D_{11}}
\right)
\\
&\ge
(1-\beta_1)
\min\{a(2-a),\,w(2-w)\}
\frac{\beta_1a}{2-a}
\bigl(1-\beta_1K(w)\bigr).
\end{aligned}
\label{eq:Psi-det-bound}
\end{equation}

For a positive-definite $2\times2$ matrix,
\[
\lambda_{\min}(D)
\ge
\frac{\det D}{\operatorname{tr}D}.
\]
Hence, by \eqref{eq:Psi-delta-def}--\eqref{eq:Psi-det-bound},
\[
\lambda_{\min}(D)
\ge
2\delta(a,w)
\max\left\{
1,\frac{\beta_1w}{2-w}
\right\}.
\]
On the other hand,
\[
P(w_t)
\preceq
\max\left\{
1,\frac{\beta_1w}{2-w}
\right\}I.
\]
Consequently,
\begin{equation}
D
\succeq
2\delta(a,w)
\max\left\{
1,\frac{\beta_1w}{2-w}
\right\}I
\succeq
2\delta(a,w)P(w_t)
\succeq
\delta(a,w)P(w_t).
\label{eq:Psi-D-lower}
\end{equation}
Since $D\prec P(w_t)$, the inequality above implies
\[
2\delta(a,w)<1,
\]
and hence, in particular,
\[
0<\delta(a,w)<1.
\]

Finally,
\begin{equation}
\begin{aligned}
\Psi_{t+1}
&=
z_t^\top
A(w_{t+1})^\top
P(w_{t+1})
A(w_{t+1})
z_t
\\
&=
\Psi_t-z_t^\top Dz_t
\\
&\le
\bigl(1-\delta(a,w)\bigr)\Psi_t.
\end{aligned}
\label{eq:Psi-contraction}
\end{equation}
This proves the quantitative contraction estimate.

Finally,
\[
\sqrt{\beta_2}
=
1-\frac{1-\beta_2}{2}
+
O\bigl((1-\beta_2)^2\bigr),
\]
and substituting this expansion into the definition of $W$ gives
\eqref{eq:W-bar-expansion}.
\end{proof}

\begin{remark}[Expansion of the cutoff]
For fixed $\beta_1$ and $w_{\max}$, the cutoff of
Proposition~\ref{prop:Psi-decay} has the expansion
\begin{equation}
    \overline W
    =2-
      \frac{\beta_1(1-2/w_{\max})}{1-\beta_1}
      (1-\beta_2)
      +O\bigl((1-\beta_2)^2\bigr)
    \qquad(\beta_2\to1),
\label{eq:W-bar-expansion}
\end{equation}
derived in the final step of the proof above.
\end{remark}

\subsection{Proof of Corollary~\ref{cor:Psi-passage-informal}}

\begin{proof}
Suppose, to the contrary, that $w_{T+n}\le w$ for every $n\ge0$, and set
\begin{equation}
    M:=\max\{v_T,\Psi_T\},
    \qquad a:=\frac{c\eta}{\sqrt M+\varepsilon}.
\label{eq:informal-passage-lower-bound}
\end{equation}
Then $0<a\le w_T\le w$.  Proposition~\ref{prop:Psi-decay} and the fact
$x_t^2\le\Psi_t$ show inductively that
\begin{equation}
    v_{T+n}\le M,
    \qquad
    \Psi_{T+n}\le
       \bigl(1-\delta(a,w)\bigr)^n\Psi_T.
\label{eq:informal-passage-stopped-bounds}
\end{equation}
Unrolling the second-moment recursion and using the second inequality gives
\begin{equation}
    v_{T+n}\le \beta_2^n v_T
      +(1-\beta_2)\Psi_T
       \sum_{j=0}^{n-1}\beta_2^{n-1-j}
       \bigl(1-\delta(a,w)\bigr)^j.
\label{eq:informal-passage-v-bound}
\end{equation}
The right-hand side tends to zero.  Hence
$w_{T+n}\to w_{\max}>2>w$, a contradiction.
\end{proof}

The following is the quantitative version referenced after
Corollary~\ref{cor:Psi-passage-informal}.

\begin{corollary}[Quantitative passage toward the subcritical cutoff]
\label{cor:Psi-passage}
Assume the hypotheses of Proposition~\ref{prop:Psi-decay} and
$w_{\max}>4$.  Fix $0<\delta<\overline W$ and $T\in\Nzero$, and let
\begin{equation}
    \tau:=\min\{n\in\Nzero:w_{T+n}>\overline W-\delta\}.
\label{eq:Psi-passage-time}
\end{equation}
If $w_T>\overline W-\delta$, then $\tau=0$.  Otherwise, set
\begin{equation}
    q:=1-\delta\!\left(
       \frac{c\eta}
       {\sqrt{\max\{v_T,\Psi_T\}}+c\eta/4},
       \overline W-\delta
       \right).
\label{eq:Psi-passage-q}
\end{equation}
Then
\begin{equation}
\begin{aligned}
    \tau\le 1+\Bigg\lceil
    \frac{2}{1-\max\{\beta_2,q\}}
    \log\Bigg[
       \frac{16}{c^2\eta^2}
       \left(
          v_T+
          \frac{2(1-\beta_2)}{1-\max\{\beta_2,q\}}\Psi_T
       \right)
    \Bigg]
    \Bigg\rceil.
\end{aligned}
\label{eq:Psi-passage-explicit}
\end{equation}
\end{corollary}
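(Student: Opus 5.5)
The proof follows the stopped-trajectory argument of Corollary~\ref{cor:Psi-passage-informal}, now with explicit constants. If $w_T>\overline W-\delta$ there is nothing to prove, so assume $w_T\le\overline W-\delta$. Fix $n\ge1$ and suppose $\tau\ge n$, so that $w_{T+j}\le\overline W-\delta$ for $j=0,\dots,n-1$. Set $M:=\max\{v_T,\Psi_T\}$. As in the informal corollary, induction using $x_t^2\le\Psi_t$ gives $v_{T+j}\le M$ along the stopped segment. Since $w_{\max}>4$ means $\varepsilon<c\eta/4$, we get
\[
w_{T+j}\ge \frac{c\eta}{\sqrt M+\varepsilon}\ge \frac{c\eta}{\sqrt M+c\eta/4}=:a .
\]
The lower endpoint $a$ is therefore exactly the first argument of $\delta$ in \eqref{eq:Psi-passage-q}. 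Proposition~\ref{prop:Psi-decay}, applied with $(a,w)=(a,\overline W-\delta)$ to each consecutive pair, then gives $\Psi_{T+j}\le q^j\Psi_T$ for $j\le n-1$. One point needs care: the pair $(w_{T+n-1},w_{T+n})$ is not guaranteed to lie below $\overline W-\delta$. I will therefore only use $\Psi$-decay up to index $n-1$ and feed $x_{T+j}^2$, $j\le n-1$, into the $v$-recursion.

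Unrolling the second-moment recursion gives
\[
v_{T+n}\le\beta_2^nv_T+(1-\beta_2)\Psi_T\sum_{j=0}^{n-1}\beta_2^{n-1-j}q^j .
\]
Let $r:=\max\{\beta_2,q\}<1$. I will bound the sum by $n r^{n-1}$ and then absorb the polynomial factor. Using $n r^{(n-1)/2}\le \sum_k k\, r^{(k-1)/2}\cdot(\text{const})\le 2/(1-r)$, up to the elementary inequality $1-\sqrt r\ge(1-r)/2$, this yields
\[
v_{T+n}\le\Bigl(v_T+\tfrac{2(1-\beta_2)}{1-r}\Psi_T\Bigr)r^{(n-1)/2}.
\]
The constants may need minor adjustment, but this is the shape appearing in \eqref{eq:Psi-passage-explicit}. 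The $1+\lceil\cdot\rceil$ structure there comes from the shift $n-1$ together with the halved exponent $\sqrt r$.

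Next I need a sufficient condition for $w_{T+n}>\overline W-\delta$. Since $\overline W-\delta<2$ and $\varepsilon<c\eta/4$, it suffices that $\sqrt{v_{T+n}}<c\eta/4$, i.e.\ $v_{T+n}<c^2\eta^2/16$. Indeed, then $\sqrt{v_{T+n}}+\varepsilon<c\eta/2$, so $w_{T+n}>2>\overline W-\delta$. This explains the factor $16/(c^2\eta^2)$. Finally, using $\log(1/\sqrt r)\ge(1-r)/2$, the exponent condition becomes
\[
n-1\ge\frac{2}{1-r}\log\Bigl[\frac{16}{c^2\eta^2}\bigl(v_T+\tfrac{2(1-\beta_2)}{1-r}\Psi_T\bigr)\Bigr].
\]
Taking $n$ equal to the right-hand side of \eqref{eq:Psi-passage-explicit} forces $v_{T+n}<c^2\eta^2/16$ and hence $w_{T+n}>\overline W-\delta$, contradicting $\tau>n$ in the case $\tau\ge n$ strictly beyond. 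So $\tau\le n$. If the logarithm is negative, the bound is trivially consistent because $\tau\le1$ already.

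I expect the main obstacle to be the bookkeeping in the geometric-sum step: handling $\beta_2=q$ versus $\beta_2\ne q$ uniformly through $r$, and getting exactly the constant $2/(1-r)$ with the $\sqrt r$ exponent. If the crude bound $n r^{n-1}\le \frac{2}{1-r}r^{(n-1)/2}$ fails by a constant, I will instead split $\sum_j\beta_2^{n-1-j}q^j\le\sum_j r^{n-1}$ and use $\max_n n\,r^{(n-1)/2}\le 2/(e(1-r))\cdot$const. A secondary subtlety is that $\delta(a,\cdot)$ requires $a\le\overline W-\delta$. This holds because $a\le w_T\le\overline W-\delta$.
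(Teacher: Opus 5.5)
Your proposal is essentially the paper's proof. It uses the same stopped-trajectory induction with lower endpoint $a=c\eta/(\sqrt{\max\{v_T,\Psi_T\}}+c\eta/4)$, the same unrolled bound with the sum dominated by $nr^{n-1}$, the same sufficient condition $v_{T+n}\le c^2\eta^2/16\Rightarrow w_{T+n}>2$, and the same $\log(1/\cdot)\ge(1-r)/2$ step. The only difference is how the factor $n$ is absorbed. You absorb it into the base $\sqrt r$ with exponent $n-1$, and this is exactly what uses up the ``$1+$'' in \eqref{eq:Psi-passage-explicit}. The paper instead uses the base $\gamma=(1+r)/2$ and obtains the ceiling alone, so its extra $1$ is slack.

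Two small repairs are needed.

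First, the justification you sketch for $nr^{(n-1)/2}\le 2/(1-r)$ fails as written. With $s=\sqrt r$, the weighted sum $\sum_k k s^{k-1}=(1-s)^{-2}$ exceeds $2/(1-r)=2/((1-s)(1+s))$ as soon as $s>1/3$. That always happens here, since $q>1/2$. Drop the weight $k$ instead. Because $s^k\ge s^{n-1}$ for $k\le n-1$, you get $ns^{n-1}\le\sum_{k=0}^{n-1}s^k\le 1/(1-s)=(1+s)/(1-r)\le 2/(1-r)$, which is exactly your constant. The same trick proves the paper's $nr^{n-1}\le 2\gamma^n/(1-r)$, because $\gamma-r=(1-r)/2$.

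Second, the negative-logarithm case never arises. The conditions $w_T\le\overline W-\delta<2$ and $\varepsilon<c\eta/4$ force $\sqrt{v_T}>c\eta/2-\varepsilon>c\eta/4$, so the bracket in \eqref{eq:Psi-passage-explicit} exceeds $1$. Your fallback remark that $\tau\le 1$ makes the bound ``trivially consistent'' would in fact be false if the ceiling were $\le -1$. Replace it with this observation.
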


\begin{proof}
Only the case $w_T\le\overline W-\delta$ needs consideration.  Put
$\lambda=\max\{\beta_2,q\}$ and $\gamma=(1+\lambda)/2$.  Since
$w_{\max}>4$, one has $\varepsilon<c\eta/4$.  Until time $\tau$, the
argument in \eqref{eq:informal-passage-stopped-bounds} therefore applies
with the lower endpoint used in \eqref{eq:Psi-passage-q}, and gives
\begin{equation}
    v_{T+n}\le\beta_2^n v_T
      +(1-\beta_2)\Psi_T
       \sum_{j=0}^{n-1}\beta_2^{n-1-j}q^j.
\label{eq:quantitative-passage-v-bound}
\end{equation}
The sum is at most $n\lambda^{n-1}$ and hence at most
$2\gamma^n/(1-\lambda)$.  Thus the right-hand side is bounded by
\begin{equation}
    \gamma^n\left(
       v_T+\frac{2(1-\beta_2)}{1-\lambda}\Psi_T
    \right).
\label{eq:quantitative-passage-geometric-bound}
\end{equation}
Moreover, if $v_{T+n}\le(c\eta/4)^2$ then, since $\varepsilon<c\eta/4$
and $\overline W-\delta<2$, one has
$w_{T+n}=c\eta/(\sqrt{v_{T+n}}+\varepsilon)>2>\overline W-\delta$.  Finally,
$\log(1/\gamma)\ge1-\gamma=(1-\lambda)/2$.  Taking $n$ equal to the
ceiling in \eqref{eq:Psi-passage-explicit} makes
\eqref{eq:quantitative-passage-geometric-bound} no larger than
$c^2\eta^2/16$, so the target has been crossed by that time.  The extra
$1$ in \eqref{eq:Psi-passage-explicit} is harmless.
\end{proof}

\subsection{Proof of Proposition~\ref{prop:four-cycle}}

\begin{proof}
For $s\in(0,1)$ define, only within this proof,
\begin{equation}
\begin{aligned}
R(s)&:=\left[
\frac{(1-\beta_1 s)(1+s)}{(1-s)(\beta_1+s)}
\right]^2,\\
G(s)&:=\frac{1-R(s)s^2}{R(s)-s^2}.
\end{aligned}
\label{eq:four-cycle-G}
\end{equation}
The ratio inside the square is larger than one. Let $s_0\in(0,1)$ be the
first solution of
\begin{equation}
s\frac{(1-\beta_1 s)(1+s)}
{(1-s)(\beta_1+s)}=1.
\label{eq:four-cycle-s0}
\end{equation}
Such a solution exists because the left-hand side is zero at $s=0$ and
tends to $+\infty$ as $s\to1^-$. On $[0,s_0]$ the denominator in
$G$ is positive, and
\begin{equation}
G(0)=\beta_1^2,
\qquad G(s_0)=0.
\label{eq:four-cycle-G-endpoints}
\end{equation}
Thus every $\beta_2\in[0,\beta_1^2)$ equals $G(s)$ for some
$s\in(0,s_0]$. When $\beta_2=\beta_1^2$, direct differentiation gives
\[
G'(0)=2\beta_1(1-\beta_1)^2>0;
\]
since $G(s_0)=0$, there is again a solution with $s\in(0,s_0)$. We
henceforth fix such a positive $s$.

Set
\begin{equation}
\begin{aligned}
A&:=\sqrt{\frac{1+\beta_2s^2}{1+\beta_2}},
&B&:=\sqrt{\frac{s^2+\beta_2}{1+\beta_2}},\\
f&:=\frac{1-\beta_1s}{1-s},
&g&:=\frac{\beta_1+s}{1+s},
&k&:=\frac{1-\beta_1}{1+\beta_1^2}.
\end{aligned}
\label{eq:four-cycle-local-quantities}
\end{equation}
The equation $\beta_2=G(s)$ is equivalent to
\begin{equation}
\frac fA=\frac gB.
\label{eq:four-cycle-balance}
\end{equation}
For the prescribed learning rate $\eta>0$, define
\begin{equation}
a:=\eta k\frac fA=\eta k\frac gB>0
\label{eq:four-cycle-amplitude}
\end{equation}
and initialize
\begin{equation}
x_0=a,
\qquad
m_0=-ka(\beta_1+s),
\qquad
v_0=a^2B^2.
\label{eq:four-cycle-initial}
\end{equation}
The momentum and second-moment recursions give
\begin{equation}
\begin{array}{c|cccc}
t\bmod4&0&1&2&3\\ \hline
x_t&a&sa&-a&-sa\\
m_t&-ka(\beta_1+s)&ka(1-\beta_1s)&ka(\beta_1+s)&-ka(1-\beta_1s)\\
v_t&a^2B^2&a^2A^2&a^2B^2&a^2A^2.
\end{array}
\label{eq:four-cycle-table}
\end{equation}
Indeed, $aA=\eta kf$ and $aB=\eta kg$ verify the first two position
updates when $\varepsilon=0$; the remaining two follow by half-turn
symmetry. This also shows directly why the construction works for every
$\eta>0$: changing $\eta$ merely rescales $a$, $m_t$, and $\sqrt{v_t}$ by
the same factor.

The two values of $w_t$ are
\begin{equation}
\frac{1+\beta_1^2}{1+\beta_1}
\frac{1-s}{1-\beta_1s},
\qquad
\frac{1+\beta_1^2}{1+\beta_1}
\frac{1+s}{\beta_1+s}.
\label{eq:four-cycle-sharpness-values}
\end{equation}
The first is always below $2$, while the second is below $2$ exactly when
\begin{equation}
s>
\frac{1-2\beta_1-\beta_1^2}
{1+2\beta_1-\beta_1^2}.
\label{eq:four-cycle-subcritical-condition}
\end{equation}
Since $\beta_1\ge\sqrt2-1$, the right-hand side is nonpositive.
Therefore the orbit is strictly subcritical. The four position values
are distinct, so its prime period is four.
\end{proof}

\begin{figure}[t]
  \centering
  \begin{minipage}{0.48\linewidth}
    \centering
    \includegraphics[width=\linewidth]{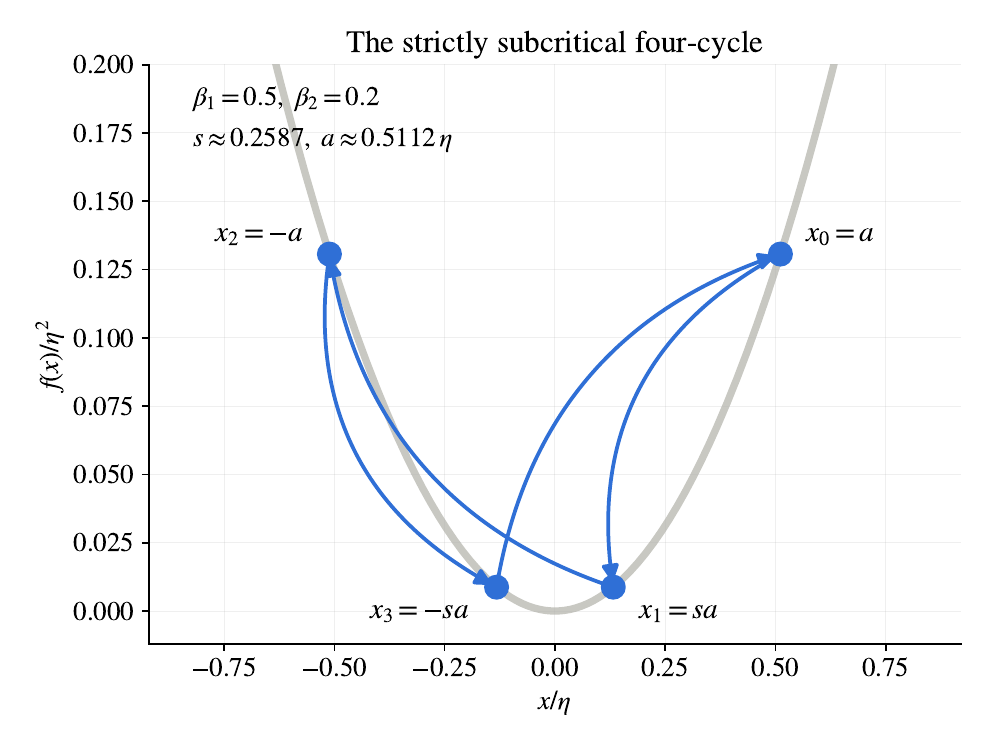}
  \end{minipage}\hfill
  \begin{minipage}{0.48\linewidth}
    \centering
    \includegraphics[width=\linewidth]{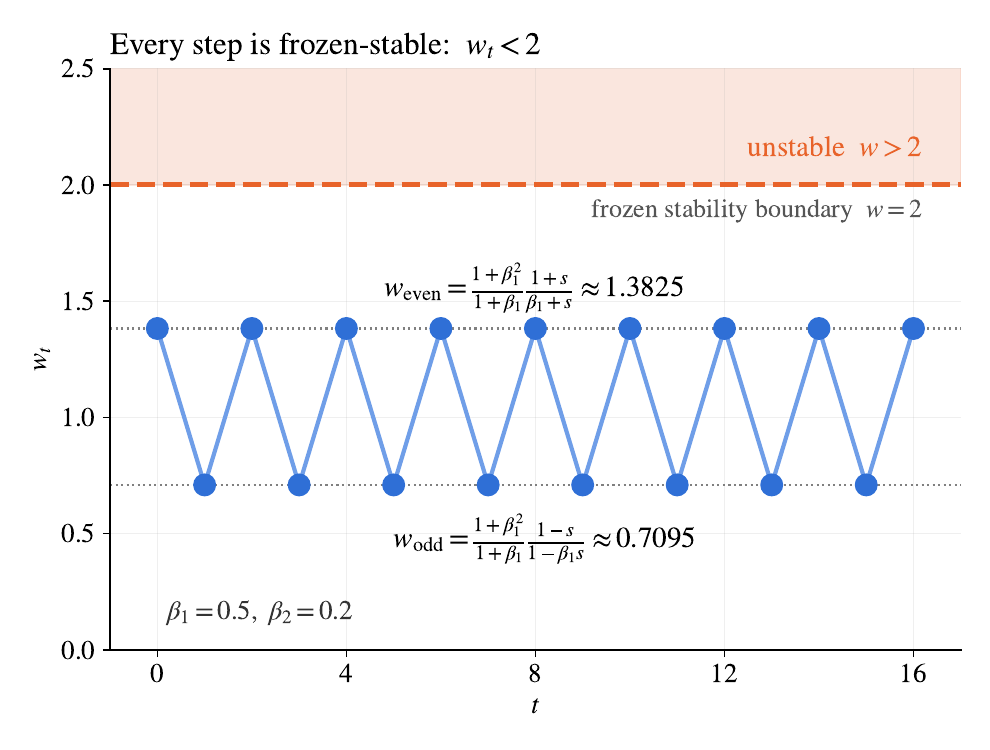}
  \end{minipage}
  \caption{An exact four-cycle of Proposition~\ref{prop:four-cycle} at
  $(\beta_1,\beta_2)=(\tfrac12,\,\frac{1}{5})$, $\varepsilon=0$, initialized at
  the point \eqref{eq:four-cycle-initial} with $s\approx0.2587$ and
  $a\approx0.5112\,\eta$.  Left: the orbit on
  $f(x)=\tfrac12x^2$.  Right: $w_t$ alternates between $\approx1.38$
  and $\approx0.71$, below $w=2$.}
  \label{fig:four-cycle}
\end{figure}

\section{Proofs for the supercritical results}
\label{app:supercritical-proofs}

\subsection{Proof of Lemma~\ref{lem:super}}

\begin{proof}
If $x_t$ and $h_t$ have the same sign, the two rows of
$A(w_{t+1})$ show that both signs reverse when $w_{t+1}>2$, and
\begin{equation}
    \abs{x_{t+1}}
      =(w_{t+1}-1)\abs{x_t}+\beta_1w_{t+1}\abs{h_t}
      >(w_{t+1}-1)\abs{x_t}.
\label{eq:aligned-direct}
\end{equation}

If $x_th_t<0$, write $h_t=-r x_t$ with $r>0$.  The next state is also
misaligned exactly when
\begin{equation}
    \frac{w_{t+1}-2}{\beta_1(w_{t+1}-1)}<r<
    \frac{w_{t+1}-1}{\beta_1w_{t+1}}.
\label{eq:misaligned-strip}
\end{equation}
Within this interval,
\begin{equation}
\begin{aligned}
    \abs{x_{t+1}}
      &=\bigl(w_{t+1}-1-\beta_1w_{t+1}r\bigr)\abs{x_t}\\
      &<\left(w_{t+1}-1-\frac{w_{t+1}(w_{t+1}-2)}{w_{t+1}-1}\right)\abs{x_t}
       =\frac{\abs{x_t}}{w_{t+1}-1}.
\end{aligned}
\label{eq:misaligned-direct}
\end{equation}
This proves both assertions.
\end{proof}

\subsection{Proof of Corollary~\ref{cor:super-blowup}}

\begin{proof}
While $w_{T+j}\ge2+\delta$, Lemma~\ref{lem:super}(i) preserves alignment.
The following stopped estimate is obtained by substituting
$w_{t+1}=c\eta/(\sqrt{v_{t+1}}+\varepsilon)$ in
\eqref{eq:aligned-direct} and using
$v_{t+1}=\beta_2v_t+(1-\beta_2)x_t^2$: for every
$1\le n<\tau$,
\begin{equation}
    (1+\delta)^{2n}-1
      <\frac{\delta(c\eta)^2}{(1-\beta_2)x_T^2}.
\label{eq:aligned-stopped-count}
\end{equation}
This is a direct induction on $n$; after clearing the positive denominators,
the induction step is precisely the sum of the nonnegative momentum term in
\eqref{eq:aligned-direct} and the nonnegative term
$\beta_2v_t$ in the second-moment update.

Since $2\log(1+\delta)\ge\delta$ for $0<\delta\le1$,
\eqref{eq:aligned-stopped-count} implies
\begin{equation}
    n<\delta^{-1}\log\!\left(
       1+\frac{\delta(c\eta)^2}{(1-\beta_2)x_T^2}
    \right)
    \qquad(1\le n<\tau).
\label{eq:aligned-count-inverted}
\end{equation}
Taking the largest surviving integer gives
\eqref{eq:super-exit-bound-main}.  If an aligned trajectory remained
supercritical and had $\liminf_t w_t>2$, it would eventually remain in
$w_t\ge2+\delta$ for some $\delta\in(0,1]$, contradicting the bound just
proved.  Hence its lower limit is $2$.
\end{proof}

\subsection{Proof of Proposition~\ref{prop:persistent-misalignment}}
\label{app:proof-persistent-misalignment}

\begin{proof}
Fix the prescribed $v_0>0$ and choose any
\begin{equation}
    0<\abs{x_0}\le\sqrt{v_0}.
\label{eq:persistent-small-box-initialization}
\end{equation}
We will select the momentum by a one-dimensional shooting argument.  For a
temporary slope $r_0>0$, let
\begin{equation}
    h_0=-r_0x_0,
    \qquad m_0=c(h_0-x_0)=-c(1+r_0)x_0.
\label{eq:persistent-shooting-initialization}
\end{equation}
Let $\sigma=\operatorname{sign}(x_0)$ and remove the alternating sign by
setting
\begin{equation}
    X_t:=\sigma(-1)^t x_t,
    \qquad H_t:=\sigma(-1)^t h_t.
\label{eq:persistent-oriented-variables}
\end{equation}
Thus $X_0=\abs{x_0}>0$ and $H_0=-r_0\abs{x_0}<0$.  From
\eqref{eq:M1-state-matrix},
\begin{equation}
    \binom{X_{t+1}}{H_{t+1}}
    =
    \begin{pmatrix}
       w_{t+1}-1&\beta_1w_{t+1}\\
       w_{t+1}-2&\beta_1(w_{t+1}-1)
    \end{pmatrix}
    \binom{X_t}{H_t}.
\label{eq:persistent-oriented-dynamics}
\end{equation}

We first verify that every finite misaligned orbit stays in the prescribed
supercritical box.  If misalignment survives from time $t$ to time $t+1$,
Lemma~\ref{lem:super}(ii) gives
\begin{equation}
    \abs{x_{t+1}}<\frac{\abs{x_t}}{w_{t+1}-1}<\abs{x_t}.
\label{eq:persistent-small-box-contraction}
\end{equation}
Starting from \eqref{eq:persistent-small-box-initialization}, induction and
the convex-combination identity
$v_{t+1}=\beta_2v_t+(1-\beta_2)x_t^2$ therefore give
\begin{equation}
    x_t^2\le v_0,
    \qquad v_t\le v_0
\label{eq:persistent-small-box}
\end{equation}
at every surviving time.  Hence
\begin{equation}
    w_t\ge\frac{c\eta}{\sqrt{v_0}+\varepsilon}=w_0>2.
\label{eq:persistent-supercritical-self-consistency}
\end{equation}
This closes the small-box argument: the supercritical hypothesis needed in
Lemma~\ref{lem:super}(ii) is automatically preserved for as long as the
trajectory remains misaligned.

We now construct an orbit for which misalignment never ends.  On a finite
surviving orbit define
\begin{equation}
    r_t:=-\frac{H_t}{X_t}=-\frac{h_t}{x_t}>0.
\label{eq:persistent-ratio}
\end{equation}
Substitution of $H_t=-r_tX_t$ in
\eqref{eq:persistent-oriented-dynamics} shows that the next state is
misaligned exactly when
\begin{equation}
    \ell(w_{t+1})<r_t<q(w_{t+1}),
\label{eq:persistent-survival-strip}
\end{equation}
where, within this proof,
\begin{equation}
    \ell(w):=\frac{w-2}{\beta_1(w-1)},
    \qquad q(w):=\frac{w-1}{\beta_1w}.
\label{eq:persistent-strip-endpoints}
\end{equation}
When \eqref{eq:persistent-survival-strip} holds, the next ratio is
\begin{equation}
    r_{t+1}=\Phi_{w_{t+1}}(r_t)
    :=\frac{\beta_1(w_{t+1}-1)r_t-(w_{t+1}-2)}
    {w_{t+1}-1-\beta_1w_{t+1}r_t}.
\label{eq:persistent-ratio-map}
\end{equation}
For each fixed $w>2$, this map is continuous and strictly increasing on
$(\ell(w),q(w))$, and
\begin{equation}
    \lim_{r\to\ell(w)^+}\Phi_w(r)=0,
    \qquad
    \lim_{r\to q(w)^-}\Phi_w(r)=+\infty.
\label{eq:persistent-ratio-boundaries}
\end{equation}

We next build nested shooting intervals.  Since $v_1$ depends on $x_0$ and
$v_0$ but not on $r_0$, the number $w_1$ is fixed.  Define
\begin{equation}
    I_1:=(\ell(w_1),q(w_1)).
\label{eq:persistent-first-interval}
\end{equation}
Every $r_0\in I_1$ survives through time one, the map $r_0\mapsto r_1$ is
continuous, and \eqref{eq:persistent-ratio-boundaries} says that its range
is $(0,+\infty)$.

Suppose inductively that $I_N=(a_N,b_N)$ is a nonempty open interval such
that every $r_0\in I_N$ survives through time $N$, the map
$r_0\mapsto r_N(r_0)$ is continuous, and
\begin{equation}
    \lim_{r_0\to a_N^+}r_N(r_0)=0,
    \qquad
    \lim_{r_0\to b_N^-}r_N(r_0)=+\infty.
\label{eq:persistent-induction-range}
\end{equation}
The quantities $w_{N+1}(r_0)$,
$\ell(w_{N+1}(r_0))$, and $q(w_{N+1}(r_0))$ are continuous on $I_N$.
Furthermore, \eqref{eq:persistent-supercritical-self-consistency} implies
the uniform bounds
\begin{equation}
    \ell(w_{N+1})
       \ge\frac{w_0-2}{\beta_1(w_0-1)}>0,
    \qquad
    q(w_{N+1})<\frac1{\beta_1}.
\label{eq:persistent-uniform-strip}
\end{equation}
Thus the graph of $r_N$ starts below the lower boundary in
\eqref{eq:persistent-survival-strip} and ends above its upper boundary.
Choose a component $I_{N+1}=(a_{N+1},b_{N+1})$ on which
\begin{equation}
    \ell(w_{N+1})<r_N<q(w_{N+1})
\label{eq:persistent-next-survival}
\end{equation}
and whose left and right endpoints meet the lower and upper boundary,
respectively.  Then
\begin{equation}
    \overline I_{N+1}\subset I_N.
\label{eq:persistent-nested-intervals}
\end{equation}
Applying \eqref{eq:persistent-ratio-boundaries} at the two new endpoints
shows that $r_0\mapsto r_{N+1}$ again has the endpoint behavior in
\eqref{eq:persistent-induction-range}.  This completes the induction.

The nonempty compact intervals $\overline I_N$ are nested, so choose
\begin{equation}
    r_0^*\in\bigcap_{N\ge1}\overline I_N.
\label{eq:persistent-selected-slope}
\end{equation}
Because $\overline I_{N+1}\subset I_N$, the selected point actually belongs
to every open survival interval $I_N$.  The initialization
\eqref{eq:persistent-shooting-initialization} with $r_0=r_0^*$ therefore
remains strictly misaligned at every finite time.  Since
$m_t+cx_t=ch_t$, this gives
$x_t(m_t+cx_t)=c x_th_t<0$ for every $t\ge0$.

It remains only to record the convergence estimates.  Iterating
Lemma~\ref{lem:super}(ii) and using $w_t\ge w_0$ gives
\begin{equation}
    \abs{x_t}\le\abs{x_0}
       \prod_{j=1}^t\frac1{w_j-1}
       \le(w_0-1)^{-t}\abs{x_0}.
\label{eq:persistent-x-decay-proof}
\end{equation}
Also $r_t<q(w_{t+1})<1/\beta_1$, and
$m_t=c(h_t-x_t)=-c(1+r_t)x_t$, so
\begin{equation}
    \abs{m_t}\le c\left(1+\frac1{\beta_1}\right)\abs{x_t}
      \to0.
\label{eq:persistent-m-decay-proof}
\end{equation}
Finally,
\begin{equation}
    v_t=\beta_2^t v_0
      +(1-\beta_2)\sum_{j=0}^{t-1}
        \beta_2^{t-1-j}x_j^2\to0,
\label{eq:persistent-v-decay-proof}
\end{equation}
because $x_j^2$ decays geometrically.  Hence
$(x_t,m_t,v_t)\to(0,0,0)$ and $w_t\to w_{\max}$.
\end{proof}

\subsection{Numerical illustration of Proposition~\ref{prop:persistent-misalignment}}
\label{app:example-misaligned}

Take
\[
    \beta_1=\tfrac12,\qquad \beta_2=0.9,\qquad \eta=1,\qquad
    \varepsilon=\tfrac1{12},
\]
so that $c=\tfrac13$ and $w_{\max}=4$, and prescribe $v_0=0.0025$ and
$x_0=0.04\le\sqrt{v_0}$, hence
$w_0=c\eta/(\sqrt{v_0}+\varepsilon)=\tfrac52>2$.  Following the shooting
argument of Appendix~\ref{app:proof-persistent-misalignment}, the momentum
is initialized as $m_0=-c(1+r_0)x_0$ and the persistent slope
$r_0^\ast\in\bigcap_N\overline I_N$ is located by bisection on $r_0$,
running the recursion \eqref{eq:adam-1d}.  With this
initialization the orbit remains misaligned, $x_t(m_t+cx_t)<0$, over the
$12$ steps shown in Figure~\ref{fig:misaligned-example} (and beyond):
$w_t\ge w_0$ increases monotonically toward $w_{\max}=4$, the envelope
$\abs{x_t}\le(w_0-1)^{-t}\abs{x_0}$ of
\eqref{eq:persistent-misalignment-decay} holds at every step, and
already $\abs{x_{12}}<10^{-9}$: the trajectory converges to the origin
without ever leaving the supercritical region.

\begin{figure}[t]
  \centering
  \includegraphics[width=\linewidth]{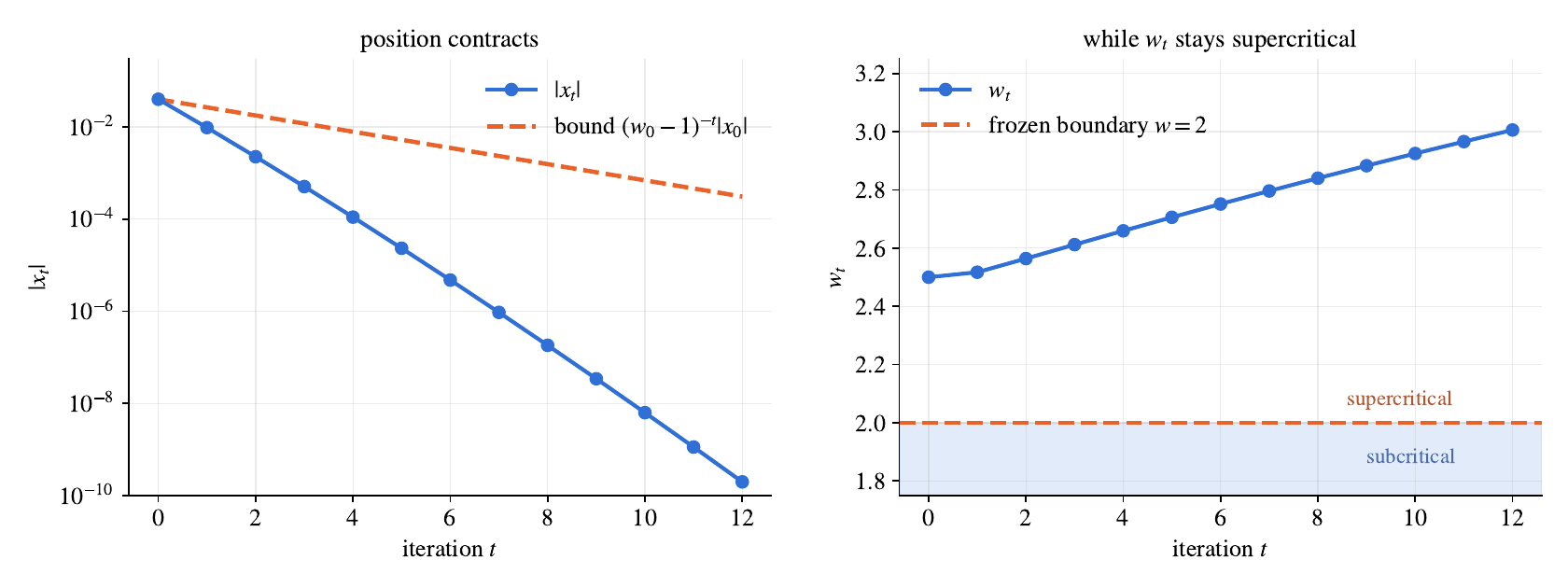}
  \caption{The exceptional orbit of
  Proposition~\ref{prop:persistent-misalignment} for $\beta_1=\frac12$,
  $\beta_2=0.9$, $\eta=1$, $\varepsilon=\frac1{12}$, $v_0=0.0025$,
  $x_0=0.04$, with $m_0$ determined by the shooting argument of
  Appendix~\ref{app:proof-persistent-misalignment}.  Left: the sign of $x_t$
  alternates at every step and $\abs{x_t}$ decays well inside the
  certified envelope $(w_0-1)^{-t}\abs{x_0}$ of
  \eqref{eq:persistent-misalignment-decay} (dashed).  Right: along the
  same steps $w_t$ increases monotonically from $w_0=\tfrac52$ toward
  $w_{\max}=4$ and never leaves the supercritical region.}
  \label{fig:misaligned-example}
\end{figure}

\section{Proofs for the $w_{\max}<2$ case}
\label{appdex:sec:w_max<2}

\subsection{Global convergence in the globally subcritical zero-momentum regime}
\label{app:beta0-global}

We record separately the complementary regime $w_{\max}<2$ for
$\beta_1=0$.  In this case the entire trajectory remains strictly below the
frozen stability threshold, and the stopped-trajectory argument used in
Theorem~\ref{thm:beta0-target-passage} becomes a global contraction
argument.

\begin{theorem}[Global exponential convergence for $\beta_1=0$]
\label{thm:beta0-global}
Assume
\[
    \beta_1=0,
    \qquad
    w_{\max}=\frac{\eta}{\varepsilon}<2.
\]
Then every trajectory of Adam converges to the origin.  More precisely, for
every initial state $(x_0,m_0,v_0)\in\R^2\times[0,\infty)$, there exist
constants $C>0$ and $\gamma\in(0,1)$, depending on the initial state and
the parameters, such that
\[
    \abs{x_t}+\abs{m_t}+v_t
    \le C\gamma^t,
    \qquad t\in\Nzero.
\]
In particular, $(x_t,m_t,v_t)\to(0,0,0)$.
\end{theorem}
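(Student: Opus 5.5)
With $\beta_1=0$ the position recursion is $x_{t+1}=(1-w_{t+1})x_t$, and $m_{t+1}=x_t$ exactly for $t\ge0$. The plan is to get a uniform contraction factor for $x_t$ that holds for the whole trajectory, and then read off geometric decay of $m_t$ and $v_t$ from the two moment recursions, in the same way as the stopped-trajectory estimate \eqref{eq:beta0-stopped-estimates} in Theorem~\ref{thm:beta0-target-passage}. Here no stopping is needed, because $w_t<2$ holds at every time.

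\emph{Step 1: a priori bounds.} For every $t\ge1$ we have $0<w_t\le w_{\max}<2$, so $\abs{1-w_t}<1$ and $\abs{x_t}\le\abs{x_0}$ for all $t$. Set $M:=\max\{v_0,x_0^2\}$. Since $v_{t+1}$ is a convex combination of $v_t$ and $x_t^2$, induction gives $v_t\le M$ for all $t$. Consequently
\[
w_t\ge \underline w:=\frac{\eta}{\sqrt M+\varepsilon}>0 \quad\text{for all } t.
\]
If $M=0$, the trajectory is identically zero (with $m_t=0$ for $t\ge1$) and there is nothing to prove. So $w_t$ stays in $[\underline w,w_{\max}]\subset(0,2)$ for all $t\ge1$.

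\emph{Step 2: uniform contraction.} Because $w\mapsto\abs{1-w}$ is convex, its maximum on this interval is attained at an endpoint. Hence
\[
\rho:=\max\{\abs{1-\underline w},\abs{1-w_{\max}}\}<1,
\qquad \abs{x_t}\le\rho^t\abs{x_0}.
\]
For the momentum, $\abs{m_t}=\abs{x_{t-1}}\le\rho^{t-1}\abs{x_0}$ when $t\ge1$, and $\abs{m_0}$ is handled by the constant $C$.

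\emph{Step 3: second moment.} Unrolling the recursion gives
\[
v_t=\beta_2^tv_0+(1-\beta_2)\sum_{j<t}\beta_2^{t-1-j}x_j^2
\le \beta_2^tv_0+(1-\beta_2)x_0^2\,G_t(\rho^2,\beta_2).
\]
Bounding $G_t(r,s)\le t\max\{r,s\}^{t-1}$, with $\lambda:=\max\{\rho^2,\beta_2\}<1$, we get $G_t\le t\lambda^{t-1}\le C'\gamma^t$ for any fixed $\gamma\in(\lambda,1)$. Finally take $\gamma:=\max\{\rho,(1+\lambda)/2\}<1$ and enlarge $C$ so that it absorbs $\abs{x_0}$, $\abs{m_0}$, $v_0$ and the factor $\rho^{-1}$. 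This yields $\abs{x_t}+\abs{m_t}+v_t\le C\gamma^t$.

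The only step that needs care is Step 1. The possible obstacle is that $w_t$ could approach $0$, which would destroy the uniform factor $\rho$ if $v_t$ grew. The convex-combination bound $v_t\le M$, which holds because $\abs{x_t}$ is nonincreasing, is exactly what rules this out. Note that $\varepsilon>0$ is used only to keep $w_t\le w_{\max}<2$. The rest is routine geometric-sum bookkeeping.
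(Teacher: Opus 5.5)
Your proposal is correct and follows the paper's proof essentially step for step. The shared steps are the a priori bounds $\abs{x_t}\le\abs{x_0}$ and $v_t\le M=\max\{v_0,x_0^2\}$, the resulting lower bound $\underline w$, the uniform factor $\rho$ from convexity of $w\mapsto\abs{1-w}$, and the unrolled second-moment recursion bounded by $G_t(\rho^2,\beta_2)$ together with $m_{t+1}=x_t$. Your explicit estimate $G_t\le t\lambda^{t-1}$ and the choice $\gamma=(1+\lambda)/2$ simply make concrete the constant that the paper leaves implicit.
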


\begin{proof}
Since $\beta_1=0$, one has $c=1$ and
\[
    m_{t+1}=x_t,
    \qquad
    v_{t+1}=\beta_2v_t+(1-\beta_2)x_t^2,
    \qquad
    x_{t+1}=(1-w_{t+1})x_t,
\]
where
\[
    w_t=\frac{\eta}{\sqrt{v_t}+\varepsilon}.
\]
Because $v_t\ge0$,
\[
    0<w_t\le w_{\max}<2
    \qquad\text{for every }t\ge0.
\]
Hence
\[
    \abs{x_{t+1}}
    =\abs{1-w_{t+1}}\,\abs{x_t}
    \le\abs{x_t},
\]
so that $\abs{x_t}\le\abs{x_0}$ for every $t\ge0$.

Set $M:=\max\{v_0,x_0^2\}$.  We claim that $v_t\le M$ for every $t\ge0$.
Indeed, this is true at $t=0$, and if $v_t\le M$, then
\[
    v_{t+1}
    =\beta_2v_t+(1-\beta_2)x_t^2
    \le\beta_2M+(1-\beta_2)M
    =M.
\]
Therefore
\[
    \underline w
    :=\frac{\eta}{\sqrt M+\varepsilon}
    \le w_t\le w_{\max}<2
    \qquad\text{for every }t\ge0.
\]
Define
\[
    \rho
    :=\max\bigl\{
      \abs{1-\underline w},
      \abs{1-w_{\max}}
    \bigr\}.
\]
Since $0<\underline w\le w_{\max}<2$, we have $0\le\rho<1$.  The convexity
of $w\mapsto\abs{1-w}$ therefore gives $\abs{1-w_t}\le\rho$ for every
$t\ge0$.  Consequently, $\abs{x_{t+1}}\le\rho\abs{x_t}$, and hence
\begin{equation}
    \abs{x_t}\le\rho^t\abs{x_0}.
\label{eq:beta0-global-x}
\end{equation}

It remains to control the second moment.  Unrolling its recursion yields
\[
    v_t
    =\beta_2^t v_0
    +(1-\beta_2)
      \sum_{j=0}^{t-1}
      \beta_2^{t-1-j}x_j^2.
\]
Using \eqref{eq:beta0-global-x},
\[
    v_t
    \le
    \beta_2^t v_0
    +(1-\beta_2)x_0^2
      \sum_{j=0}^{t-1}
      \beta_2^{t-1-j}\rho^{2j}
    =
    \beta_2^t v_0
    +(1-\beta_2)x_0^2
      G_t(\rho^2,\beta_2).
\]
Since $\rho^2<1$ and $\beta_2<1$, the right-hand side converges to zero.
Moreover, letting $\lambda:=\max\{\rho^2,\beta_2\}<1$ and choosing any
$\gamma\in(\lambda,1)$, the geometric convolution satisfies
$G_t(\rho^2,\beta_2)\le C_1\gamma^t$ for some $C_1>0$.  Thus
$v_t\le C_2\gamma^t$ after enlarging $C_2$ if necessary.

Finally, $m_{t+1}=x_t$, so $\abs{m_{t+1}}\le\rho^t\abs{x_0}$.  Combining
the preceding estimates, and absorbing the finite initial values into the
constant, gives
\[
    \abs{x_t}+\abs{m_t}+v_t
    \le C\widetilde\gamma^t
\]
for some $C>0$ and $\widetilde\gamma\in(\max\{\rho,\gamma\},1)$.
Therefore $(x_t,m_t,v_t)\to(0,0,0)$ exponentially.
\end{proof}

\begin{remark}
The condition $w_{\max}<2$ is sufficient for global convergence when
$\beta_1=0$ because the scalar position recursion admits a common
contraction factor once $v_t$ is bounded.  This conclusion does not extend
directly to positive momentum: when $0<\beta_1<1$, pointwise frozen
stability $w_t<2$ does not in general imply contraction of the adaptive
two-dimensional $(x_t,m_t)$ dynamics.
\end{remark}

\subsection{Global convergence in the globally subcritical regime with
positive momentum}
\label{app:positive-global-subcritical}

The next result removes the zero-momentum restriction.

\begin{theorem}[Global convergence in the strictly subcritical regime]
\label{thm:positive-global-subcritical}
Assume
\[
    0<\beta_1<1,
    \qquad
    0\le \beta_2<1,
    \qquad
    \sqrt{\beta_2}>\beta_1^2,
    \qquad
    w_{\max}<2.
\]
Then every trajectory of Adam on $f(x)=\tfrac12x^2$ converges to the
origin.  More precisely, for every initial state
$(x_0,m_0,v_0)\in\R^2\times[0,\infty)$, there exist constants $C>0$ and
$\gamma_0\in(0,1)$, possibly depending on the initial state, such that
\[
    \abs{x_t}+\abs{m_t}+v_t
    \le C\gamma_0^t,
    \qquad t\ge0.
\]
In particular, $(x_t,m_t,v_t)\to(0,0,0)$ and $w_t\to w_{\max}$.
\end{theorem}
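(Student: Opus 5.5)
The plan is to rerun the adaptive-Lyapunov argument of Proposition~\ref{prop:Psi-decay} with a retuned weight. The hypothesis $w_{\max}<2$ removes the need for any cutoff, but the ratio bound on $\theta(w_{t+1})/\theta(w_t)$ changes character, and this is where the weaker condition $\sqrt{\beta_2}>\beta_1^2$ enters.

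\textbf{Step 1: the weighted functional and the LMI condition.} With $\theta(y)=y/(2-y)$ as in that proof, I would use
\[
\Psi^\kappa_t:=x_t^2+\kappa\,\theta(w_t)\,h_t^2,\qquad \kappa\in\bigl(\beta_1^2/\sqrt{\beta_2},\,1\bigr),
\]
which is a nonempty interval precisely because $\sqrt{\beta_2}>\beta_1^2$. Since $0<w_t\le w_{\max}<2$ for all $t$, every matrix $A(w_{t+1})$ lies in the regime where the Sylvester characterization \eqref{eq:Psi-LMI-iff} applies. Taking $y=\kappa\theta(w_t)$ and $Y=\kappa\theta(w_{t+1})$, the first condition $Y<\theta(w_{t+1})$ holds because $\kappa<1$. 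The second condition becomes
\[
\theta(w_{t+1})/\theta(w_t)<\kappa/\beta_1^2 .
\]

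\textbf{Step 2: the ratio bound, with the sign flipped.} The derivation of \eqref{eq:Psi-wt-lower} and \eqref{eq:Psi-theta-ratio} uses only the second-moment recursion, so it remains valid and gives $\theta(w_{t+1})/\theta(w_t)\le K(w_{t+1})$. Now, however,
\[
s:=\sqrt{\beta_2}+2(1-\sqrt{\beta_2})/w_{\max}>1,
\]
so $K(w)=(2-sw)/(\sqrt{\beta_2}(2-w))$ is decreasing on $(0,2)$ rather than increasing. Hence $K(w)\le K(0)=1/\sqrt{\beta_2}$ uniformly. Combined with $\kappa/\beta_1^2>1/\sqrt{\beta_2}$, this yields $\beta_1^2K(w_{t+1})\le \beta_1^2/\sqrt{\beta_2}<\kappa$ at every step, with a uniform margin. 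Therefore $\Psi^\kappa$ is nonincreasing along every trajectory. The case $\beta_2=0$ cannot occur, since the hypothesis $\sqrt{\beta_2}>\beta_1^2$ forces $\beta_2>0$.

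\textbf{Step 3: quantitative contraction.} I would first obtain a lower bound on $w_t$ exactly as in the proof of Corollary~\ref{cor:Psi-passage-informal}. Since $x_t^2\le\Psi^\kappa_t\le\Psi^\kappa_0$ and $v_{t+1}$ is a convex combination of $v_t$ and $x_t^2$, induction gives $v_t\le M:=\max\{v_0,\Psi^\kappa_0\}$. Hence
\[
a:=c\eta/(\sqrt M+\varepsilon)\le w_t\le w_{\max}<2 \quad\text{for all } t.
\]
On the compact window $[a,w_{\max}]$, I would repeat the Schur-complement, determinant and trace computation \eqref{eq:Psi-Schur}--\eqref{eq:Psi-D-lower} with $\beta_1$ replaced by $\kappa$ in the weight. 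This gives $D_{11}=w_{t+1}(2-w_{t+1})-\kappa w_{t+1}(2-w_{t+1})\cdot(\text{const})>0$ and a Schur complement at least $\kappa\theta(a)\bigl(1-\beta_1^2/(\kappa\sqrt{\beta_2})\bigr)$, up to the exact constants. From these I expect a uniform $\delta\in(0,1)$ with $\Psi^\kappa_{t+1}\le(1-\delta)\Psi^\kappa_t$.

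\textbf{Step 4: convergence of all variables.} From Step 3, $x_t^2\le(1-\delta)^t\Psi^\kappa_0$. Also $h_t^2\le\Psi^\kappa_t/(\kappa\theta(a))$, and $m_t=c(h_t-x_t)$, so $|x_t|$ and $|m_t|$ decay geometrically. Unrolling the $v$-recursion as in \eqref{eq:informal-passage-v-bound} gives $v_t\le C\max\{\beta_2,1-\delta\}^{t/2}$, up to constants. Consequently $w_t\to w_{\max}$.

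\textbf{Main obstacle.} The main obstacle is Step 3, specifically redoing the exact algebra of $D$ with weight $\kappa\neq\beta_1$: the identity $D_{11}=(1-\beta_1)w(2-w)$ used $\kappa=\beta_1$. I would recompute from the general formula \eqref{eq:Psi-D-identities}, which gives $D_{11}=(2-w)\bigl(w-(2-w)\kappa\theta(w)\bigr)=(1-\kappa)w(2-w)>0$, and take the Schur complement to be $(2-w)\bigl(\kappa\theta(w_t)-\beta_1^2\theta(w_{t+1})\bigr)\cdot(\cdots)$ as dictated by the determinant identity.
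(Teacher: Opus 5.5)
Your proposal is correct. It follows the paper's overall architecture: a diagonal adaptive weight, the Sylvester criterion \eqref{eq:Psi-LMI-iff}, the ratio bound from the second-moment recursion, then $v_t\le M$, a floor $a\le w_t$, and unrolling. The Lyapunov weight and the source of the uniform rate differ.

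The paper takes $y(w)=\beta_1\sqrt{\theta(w)\theta(\Phi(w))}$, where $\Phi(w)$ is the largest admissible successor of $w$. This is the geometric mean of the two LMI endpoints $\beta_1^2\theta(\Phi(w))$ and $\theta(w)$. It shows that $R(w)=\beta_1^2\theta(\Phi(w))/\theta(w)$ is decreasing with $R(0^+)=\beta_1^2/\sqrt{\beta_2}<1$. It then obtains a contraction factor from continuity of the strict LMI on the compact transition set $\mathcal K$, so its rate is not explicit.

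Your observation that $K$ is decreasing with $K(0)=1/\sqrt{\beta_2}$ once $w_{\max}<2$ is the same fact as $R'<0$ and $R(0^+)=\beta_1^2/\sqrt{\beta_2}$. Your fixed multiple $\kappa\theta(w)$ decouples the two LMI conditions: one holds automatically from $\kappa<1$, and the other holds with a uniform margin. It also lets the determinant/trace computation of Proposition~\ref{prop:Psi-decay} carry over with $\beta_1$ replaced by $\kappa$. That yields an explicit $\delta$ in terms of $a$, $w_{\max}$ and $\kappa$, which the compactness argument does not give.

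The weight in Proposition~\ref{prop:Psi-decay} is the case $\kappa=\beta_1$, which here would require $\beta_1<\sqrt{\beta_2}$. The freedom to push $\kappa$ toward $1$ is exactly what reaches the weaker hypothesis $\beta_1^2<\sqrt{\beta_2}$. The paper's geometric mean reaches it without a free parameter.

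One cleanup for Step 3: by \eqref{eq:Psi-D-identities}, $D_{22}-D_{12}^2/D_{11}=\det D/D_{11}=\kappa\theta(w_t)-\beta_1^2\theta(w_{t+1})$ exactly, with no extra factor $(2-w_{t+1})$. This gives the lower bound $\kappa\theta(a)\bigl(1-\beta_1^2/(\kappa\sqrt{\beta_2})\bigr)$ that you wrote. Combined with $D_{11}=(1-\kappa)w_{t+1}(2-w_{t+1})\ge(1-\kappa)\min\{a(2-a),w_{\max}(2-w_{\max})\}$ and $\operatorname{tr}D\le 1+\kappa\theta(w_{\max})$, it closes the contraction estimate.
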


\begin{proof}
Recall
\[
    \xi_t:=\frac{m_t+cx_t}{c},
    \qquad
    z_t:=\binom{x_t}{\xi_t},
\]
so that $z_{t+1}=A(w_{t+1})z_t$.  Also recall
\[
    \theta(w):=\frac{w}{2-w},
    \qquad
    \Phi(w):=
    \frac{w}{\sqrt{\beta_2}+(1-\sqrt{\beta_2})w/w_{\max}}.
\]
Every Adam transition satisfies $0<w_{t+1}\le\Phi(w_t)$.

We first construct a quadratic weight valid on the entire interval
$(0,w_{\max}]$.  Define
\[
    R(w):=
    \frac{\beta_1^2\theta(\Phi(w))}{\theta(w)}
    =
    \beta_1^2\frac{2-w}
    {2\sqrt{\beta_2}-\bigl[1-2(1-\sqrt{\beta_2})/w_{\max}\bigr]w}.
\]
A direct differentiation gives
\[
    R'(w)
    =
    \frac{2\beta_1^2(1-\sqrt{\beta_2})(1-2/w_{\max})}
    {\bigl\{2\sqrt{\beta_2}-\bigl[1-2(1-\sqrt{\beta_2})/w_{\max}\bigr]w\bigr\}^2}.
\]
Since $w_{\max}<2$, one has $R'(w)<0$.  Moreover,
\[
    \lim_{w\downarrow0}R(w)
    =
    \frac{\beta_1^2}{\sqrt{\beta_2}}<1
\]
by the assumption $\sqrt{\beta_2}>\beta_1^2$.  Hence
\[
    \beta_1^2\theta(\Phi(w))<\theta(w),
    \qquad 0<w\le w_{\max}.
\]
Define
\[
    y(w):=
    \beta_1\sqrt{\theta(w)\theta(\Phi(w))}.
\]
Then
\[
    \beta_1^2\theta(\Phi(w))
    <y(w)<\theta(w),
    \qquad 0<w\le w_{\max}.
\]
For any valid transition $w\mapsto u$, monotonicity of $\theta$ and
$u\le\Phi(w)$ give
\[
    y(w)>\beta_1^2\theta(u),
    \qquad
    y(u)<\theta(u).
\]
The exact one-step diagonal quadratic criterion therefore yields
\[
    A(u)^\top P(y(u))A(u)
    \prec P(y(w)),
    \qquad
    P(y):=\operatorname{diag}(1,y).
\]
Set
\[
    E_t
    :=
    z_t^\top P(y(w_t))z_t
    =
    x_t^2+
    \frac{y(w_t)}{c^2}(m_t+cx_t)^2.
\]
It follows that $E_{t+1}\le E_t$, with strict inequality whenever
$z_t\neq0$.  In particular, $x_t^2\le E_t\le E_0$.

Let $M:=\max\{v_0,E_0\}$.  The second-moment recursion then gives
inductively $v_t\le M$, and hence
\[
    0<\underline w
    :=\frac{c\eta}{\sqrt M+\varepsilon}
    \le w_t\le w_{\max}<2.
\]
Thus all transitions of the trajectory lie in the compact set
\[
    \mathcal K
    :=
    \{(w,u):
      \underline w\le w,u\le w_{\max},\;
      u\le\Phi(w)\}.
\]
The strict quadratic inequality is continuous on $\mathcal K$.
Consequently, there exists $q\in(0,1)$ such that
$E_{t+1}\le q^2E_t$, and therefore $E_t\le q^{2t}E_0$.
Since $y$ has a positive minimum on
$[\underline w,w_{\max}]$, both $x_t$ and $\xi_t$, and hence $m_t$, decay
geometrically.

Finally,
\[
    v_t
    =
    \beta_2^tv_0
    +(1-\beta_2)
      \sum_{j=0}^{t-1}
      \beta_2^{t-1-j}x_j^2,
\]
and the geometric bound on $x_j^2$ implies geometric decay of $v_t$.
Thus there exist $C>0$ and $\gamma_0\in(0,1)$ such that
$\abs{x_t}+\abs{m_t}+v_t\le C\gamma_0^t$.
Hence $(x_t,m_t,v_t)\to(0,0,0)$ and, consequently,
$w_t\to w_{\max}$.
\end{proof}

\section{Experimental details for Figure~\ref{fig:mechanism-beta0}}
\label{app:fig1-details}

\emph{Left, top.}  Standard (bias-corrected) full-batch Adam with
$\beta_1=0.9$, $\beta_2=0.999$, $\eta=10^{-2}$, $\varepsilon=10^{-8}$
trains a two-layer network $x\mapsto W_2^{\top}\tanh(W_1x+b_1)+b_2$ with
$16$ hidden units on the squared loss for the regression task
$y=\sin(2x)$ over $32$ equispaced inputs in $[-2,2]$.  Every $40$ steps
the Hessian is computed by central finite differences and we plot the
preconditioned sharpness $\PS_t$ of \eqref{eq:preconditioned-sharpness},
with $v_t$ replaced by its bias-corrected estimate; $\PS_t$ equilibrates
near the frozen threshold
$\PS^\star=2(1+\beta_1)/\bigl(\eta(1-\beta_1)\bigr)=38/\eta$, the
adaptive edge of stability of \citet{cohen2024adaptivegradientmethodsedge}.

\emph{Left, bottom.}  Uncorrected Adam \eqref{eq:adam-1d} on the
quadratic \eqref{eq:quad} with the same $(\beta_1,\beta_2)$ and $\eta=1$,
$w_{\max}=10$ (equivalently $\varepsilon=c\eta/10$), $x_0=0.1$, $m_0=0$,
$v_0=0.01$; the normalized sharpness $w_t$ oscillates around the
parameter-free boundary $w=2$.

\emph{Right.}  The negative-feedback loop: supercritical steps expand the
state and inflate $v_t$, pushing $w_t$ down, while subcritical steps
contract the state and deflate $v_t$, pushing $w_t$ up.  For $\beta_1=0$,
Appendix~\ref{proof:zero momentum} gives explicit finite-step bounds for
both transitions.

\end{document}